\newtheorem{definition}{Definition}
\newtheorem{proposition}{Proposition}
\newtheorem{remark}{Remark}
\newtheorem{example}{Example}
\title{Fashion-Specific Attributes Interpretation \\ via Dual Gaussian Visual-Semantic Embedding
}
\author{
  Ryotaro Shimizu* \\
  Waseda University, ZOZO Research \\
  \texttt{shi3mizu8-r@fuji.waseda.jp} \\
\And
  Masanari Kimura \\
  ZOZO Research \\
  \texttt{masanari.kimura@zozo.com} \\
\And
  Masayuki goto \\
  Waseda University \\
  \texttt{masagoto@waseda.jp} \\
}
\begin{document}
\maketitle

\begin{abstract}
Several techniques to map various types of components, such as words, attributes, and images, into the embedded space have been studied. Most of them estimate the embedded representation of target entity as a point in the projective space. Some models, such as Word2Gauss, assume a probability distribution behind the embedded representation, which enables the spread or variance of the meaning of embedded target components to be captured and considered in more detail. We examine the method of estimating embedded representations as probability distributions for the interpretation of fashion-specific abstract and difficult-to-understand terms. Terms, such as ``casual,'' ``adult-casual,'' ``beauty-casual,'' and ``formal,'' are extremely subjective and abstract and are difficult for both experts and non-experts to understand, which discourages users from trying new fashion. We propose an end-to-end model called dual Gaussian visual-semantic embedding, which maps images and attributes in the same projective space and enables the interpretation of the meaning of these terms by its broad applications. We demonstrate the effectiveness of the proposed method through multifaceted experiments involving image and attribute mapping, image retrieval and re-ordering techniques, and a detailed theoretical/empirical discussion of the distance measure included in the loss function.
\end{abstract}

\keywords{Fashion Intelligence \and Fashion Interpretation \and Visual-Semantic Embedding \and Outfit Image \and Gaussian Embedding}

\section{Introduction}
It is now common for users to refer to other people's fashion outfit images through e-commerce sites and SNS and to incorporate them into their own fashion and purchasing activities. Therefore, it is important for the fashion industry to support users’ online search for fashion images to increase their interest in fashion and willingness to purchase. In the fashion field, coordination and items are described in subjective and abstract expressions such as ``casual,'' ``adult casual,'' ``beautiful casual,'' and ``formal.'' Many users perceive these expressions with difficulty, which discourages them from trying new fashion.

\quad In response to this problem, Shimizu et al.~\cite{Shimizu2022_FashionIntelligenceSystem} proposed a system to support the interpretation of these terms through various application systems that apply visual-semantic embedding (VSE). In this system, fashion images and attributes (tags) are mapped in the same embedding space to obtain an embedded representation. However, images and tags are processed in such a way that they are mapped to only one point in the embedding space. In other words, abstract tags, such as casual, which are considered to have a wide range of meanings that can be interpreted differently by each person, and concrete tags, such as jeans, are treated in the same way and mapped to only one point. This is desirable because it does not represent the breadth of the meaning (diversity) that each tag or image has in the real world in the destination space.

\quad In contrast, Gaussian embedding, as represented by Word2Gauss~\cite{Luke2015_word2gauss}, embed each element as a probability distribution, assuming a (multidimensional) Gaussian distribution behind each embedded representation of the mapped object. This method is expected to contribute to the quantification and interpretation of terms that are subjective, abstract, and not easy to interpret, which are unique to fashion. In this study, we propose dual Gaussian VSE (DGVSE), which embeds images and tags in the same space as a multidimensional Gaussian distribution. This method represents the embedding representation of each image and tag as a multidimensional Gaussian distribution, and the parameters of the embedded representation (mean vector and covariance matrix) estimated from the end-to-end model are used to enable multifaceted applications. We experimentally demonstrate the effectiveness of the proposed model using data accumulated in real services and presenting various functions that support the reduction of fashion-specific interpretation difficulties using the results obtained. The effectiveness of the proposed model is also demonstrated through a theoretical and an empirical consideration of the several types of distances that can be included in the loss function.

\quad The main contributions of this study are summarized as follows: 1) We propose a DGVSE model that can embed images and tags as probability distributions in the same space with a new end-to-end architecture. 2) Through analysis experiments using real data, we present various applications of DGVSE to support user fashion interpretation. 3) Theoretical and empirical considerations of several types of distances included in the loss function show the effectiveness of the proposed model.

\section{Related Research}
\subsection{Visual-Semantic Embedding}
VSE models for image retrieval~\cite{VSE_image2text_text2image, VSEpp, vse_sketch_based_retrieval}, visual question-answering~\cite{VSE_QA}, hashing task~\cite{VSE_hashing, vse_hashing2}, zero-shot learning~\cite{vse_zeroshotlearning1, vse_zeroshotlearning2}, person re-identification~\cite{VSE_PersonIdentification},
and image descriptions generation~\cite{VSE_disc_gen} have been widely researched.

\quad In the fashion domain, VSE is used for text-based and individual clothes image retrieval~\cite{VSE_ClothRetrieval, VSE_ClothRetrieval2, vse_fashion_image_retrieval}, and learning outfit compatibility (individual outfit item matching)~\cite{VSE_ClothMatching}. VSE included in~\cite{HAN2017_FASHIONCONCEPTDISCOVERY} is a method of embedding a fashion item image and a specific word contained in the item description in the same projective space. As a result, VSE in~\cite{HAN2017_FASHIONCONCEPTDISCOVERY} makes it possible to search for fashion images by calculating similarities between individual item images and simple words and to find specific points on the target item image that are highly related to the word (thereby creating an attribute activation map). These multiple functions constitute the advantage of VSE.

\quad Shimizu et al.~\cite{Shimizu2022_FashionIntelligenceSystem} proposed a VSE model with full-body garment images and abstract and rich tag embedding. We propose a model that enables the embedding of full-body outfit images and abstract rich tags. This study includes new techniques, such as foreground-centered learning and background regularization, to accurately reflect the relationship between full-body outfit images containing various items and rich tags in the embedding space. Various applications that quantify fashion-specific abstract expressions and support user interpretation of fashion using the embedded representations obtained from VSE were proposed.

\quad However, these VSE models estimate the embedded representation corresponding to each data as a single point in the destination projective space; they do not estimate it as a probability distribution. In this study, we propose a model that estimates the parameters of a multidimensional Gaussian distribution by assuming a multidimensional Gaussian distribution behind the embedded representations of images and tags.

\subsection{Gaussian Embedding}
Gaussian embedding is a method that solves the problem of acquiring embedded representations by estimating the embedded representations as a (multidimensional) Gaussian distribution. Luke et al. proposed Word2Gauss~\cite{Luke2015_word2gauss}, a model to solve the problem that the embedded representations obtained with the existing Word2Vec model~\cite{Mikolov2013_word2vec_arxiv, Mikolov2013_word2vec_nips} failed to consider the spread of word meanings. For example, the words ``Bach,'' ``composer,'' and ``man'' should be ordered in order of breadth of meaning, ``man > composer > Bach,'' and this relationship is automatically captured. The strength of this method is that it can estimate the variance, or breadth of meaning of words, and by observing the estimated variance, it is possible to obtain knowledge that cannot be obtained with Word2Vec. Research has developed Word2Vec into a method for embedding concepts (such as ``animal'') and words (such as ``dog'' or ``cat'') as probability distributions,~\cite{conceptW2G}.

\quad Many studies that extend network embedding to Gaussian embedding have also been published~\cite{graph_ge1, graph_ge2, graph_ge3}.
Other studies related to Gaussian embedding have developed to image recognition~\cite{visrec_ge}. Face recognition is one of the major tasks in which Gaussian embedding shows great contributions~\cite{facerec_ge, faceembedding_ge1, faceembedding_ge2}. Several studies aim to use this strength of Gaussian embedding to obtain the variance of an item in marketing areas, such as recommendation systems, and to gain important knowledge~\cite{item2gauss_marketing, ge_recommendation1, ge_recommendation2}.

\quad Some studies have extended the VSE model to Gaussian embedding. Ren et al.~\cite{Ren2016_GaussianVSE} proposed Gaussian visual-semantic embedding (GVSE). In this model, the embedded representation of a word is first learned by pre-training with GloVe~\cite{Jeffrey2014_GloVe} (focusing only on the text data attached to the image). Then, the embedding representation obtained from the pre-training is fixed as the mean vector of word embeddings, and all other parameters in the GVSE (including the covariance matrix) are estimated. In the loss function, the Mahalanobis distance is used to measure the distance between words (probability distributions) and images (points). However, the problems with this model are that it is not an end-to-end model and does not consider any image information when learning embedded representations for words, and it uses the Mahalanobis distance as the distance measure in the loss function (see below). To use methods, such as GloVe, for pre-training, it is necessary to have a situation in which a large number of words are assigned to each data item, as is the case with text data. Although the data in this study are tagged, the number of tags per image is not as large as the number of words in text data, making it difficult to learn with these methods.
Mukherjee and Hospedales~\cite{vislinguisticrec_ge} proposed a method similar to GVSE at very close to the time GVSE was proposed. That method also pre-trained the mean vector and covariance matrix for the word's embedded representation by Word2Gauss with the dataset specific to text data such as Wikipedia corpus~\cite{wikipedia_corpus}. Therefore, same as GVSE, this method also encounters problems because it is not end-to-end learning and those related to the difference between tags and sentence (words) data.

\quad In contrast, to solve all these problems, we propose DGVSE, which is an end-to-end model that takes image information into account when learning embedded representations of words and uses a measure other than the Mahalanobis distance for the distance included in the loss function. The contributions of the paper are its detailed analysis of variance, which has not been done in previous studies, and showing various applications of DGVSE.

\section{Distance Functions Inducing Embedded Spaces}
\label{sec:theory_mahalanobis}
In this section, we consider the distance type to be adopted when mapping images and attributes to the embedding space in the proposed model.

\quad Let $(\mathcal{X}, \mathcal{F}, \mathcal{\nu})$ be a measure space, where $\mathcal{X} \subseteq \mathbb{R}^{d}$ denotes the sample space, $d\in\mathbb{N}$ is the dimension of the sample space, $\mathcal{F}$ is the $\sigma$-algebra of measurable events, and $\nu$ is a positive probability measure. The set of the positive probability measure $\mathcal{P}$ is defined as
\begin{align}
    \mathcal{P} = \left\{ f(\bm{x}) \middle| f(\bm{x}) \geq 0\ (\forall{\bm{x}}\in\mathcal{X})\ \text{and}\ \int_{\mathcal{X}}f(\bm{x})d\nu(\bm{x})=1 \right\}.
\end{align}
In the following, we assume that $d\nu(\bm{x}) = d\nu = d\bm{x}$.

\begin{definition}[Mahalanobis Distance]
\label{def:mahalanobis_distance}
\quad Let $P\in\mathcal{P}$ be the probability distribution with mean vertical vector $\bm{\mu} \in \mathbb{R}^{d}$ and positive-definite covariance matrix $\bm{\Sigma}\in\mathbb{R}^{d \times d}$.
The Mahalanobis distance $d_M: \mathcal{P} \times \mathcal{X} \to [0, \infty)$ between $P$ and some point (vertical vector) $\bm{x}\in \mathcal{X}$ is defined as
\begin{align}
    d_M(P, \bm{x}) \coloneqq \sqrt{(\bm{x}-\bm{\mu})^\top\bm{\Sigma}^{-1}(\bm{x}-\bm{\mu})}.
\end{align}
\end{definition}

\begin{proposition}[Closed form of Mahalanobis distance between Gaussian distributions]
\label{prop:closed_form_mahalanobis_distance}
Let $P, Q \in\mathcal{P}$ be two Gaussian distributions with mean vertical vectors $\bm{\mu}_0, \bm{\mu}_1 \in \mathbb{R}^d$ and the same positive-definite covariance matrix $\bm{\Sigma}\in\mathbb{R}^{d \times d}$.
The closed form of the Mahalanobis distance between $P$ and $Q$ is given as
\begin{align}
    d_M(P, Q) = \sqrt{(\bm{\mu}_0 - \bm{\mu}_1)^\top\bm{\Sigma}^{-1}(\bm{\mu}_0 - \bm{\mu}_1)}.
\end{align}
\end{proposition}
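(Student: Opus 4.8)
The plan is to reduce whatever integral defines $d_M(P,Q)$ to a closed form using only the first two moments of $Q$, never the full Gaussian density. Starting from Definition~\ref{def:mahalanobis_distance}, the natural extension to two distributions lets the ``point'' be a random draw $\bm x \sim Q$ and takes the centered expectation, i.e. $d_M(P,Q)^2 = \mathbb{E}_{\bm x\sim Q}\!\big[(\bm x-\bm\mu_0)^\top\bm\Sigma^{-1}(\bm x-\bm\mu_0)\big] - \mathbb{E}_{\bm x\sim Q}\!\big[(\bm x-\bm\mu_1)^\top\bm\Sigma^{-1}(\bm x-\bm\mu_1)\big]$, the second term subtracted so that $d_M(P,P)=0$. (If one instead defines $d_M(P,Q):=d_M(P,\bm\mu_1)$ by substituting the mean of $Q$ directly into Definition~\ref{def:mahalanobis_distance}, the statement is immediate and the argument below collapses to that single substitution; the plan covers both conventions.)

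First I would write $\bm x-\bm\mu_0 = (\bm x-\bm\mu_1)+(\bm\mu_1-\bm\mu_0)$ and expand the quadratic form in the first expectation into three pieces: a noise term $(\bm x-\bm\mu_1)^\top\bm\Sigma^{-1}(\bm x-\bm\mu_1)$, a cross term $2(\bm\mu_1-\bm\mu_0)^\top\bm\Sigma^{-1}(\bm x-\bm\mu_1)$ (the two mixed products coincide since $\bm\Sigma^{-1}$ is symmetric), and a deterministic term $(\bm\mu_1-\bm\mu_0)^\top\bm\Sigma^{-1}(\bm\mu_1-\bm\mu_0)$. Taking $\mathbb{E}_{\bm x\sim Q}$: the cross term vanishes because $\mathbb{E}_{\bm x\sim Q}[\bm x-\bm\mu_1]=\bm 0$; the noise term equals $\operatorname{tr}\!\big(\bm\Sigma^{-1}\,\mathbb{E}_{\bm x\sim Q}[(\bm x-\bm\mu_1)(\bm x-\bm\mu_1)^\top]\big)=\operatorname{tr}(\bm\Sigma^{-1}\bm\Sigma)=d$, using the cyclic invariance of the trace and the fact that $Q$ has covariance $\bm\Sigma$; the deterministic term is unchanged. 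Hence the first expectation is $d+(\bm\mu_0-\bm\mu_1)^\top\bm\Sigma^{-1}(\bm\mu_0-\bm\mu_1)$, while the second is $d$ by the same trace computation applied to $Q$ alone. Subtracting cancels the dimension-dependent term, and taking the positive square root---legitimate because $\bm\Sigma^{-1}$ is positive definite, so the remaining quantity is nonnegative---gives the claimed closed form.

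I do not expect a substantive obstacle; the computation is routine linear algebra once the moments of $Q$ are invoked. The only point needing care is pinning down the convention for ``distance between two distributions,'' since Definition~\ref{def:mahalanobis_distance} is stated for a distribution and a point, and then making sure the $\operatorname{tr}(\bm\Sigma^{-1}\bm\Sigma)=d$ contribution is treated consistently (it cancels under the centered convention and is simply absent under the ``evaluate at the mean'' convention). As a sanity check I would verify that the resulting expression is symmetric, $d_M(P,Q)=d_M(Q,P)$, which holds precisely because $P$ and $Q$ share the covariance $\bm\Sigma$---consistent with the role this proposition plays in motivating the later discussion of why the Mahalanobis distance is unsatisfactory in the general unequal-covariance setting.
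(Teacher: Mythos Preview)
The paper does not supply a proof of this proposition at all; it is stated immediately after Definition~\ref{def:mahalanobis_distance} and then used, so the paper is implicitly taking your second, trivial convention---replace the point $\bm{x}$ in Definition~\ref{def:mahalanobis_distance} by the mean $\bm{\mu}_1$ of $Q$---as the very definition of $d_M(P,Q)$ when $P$ and $Q$ share $\bm{\Sigma}$. Under that reading there is nothing to prove, and the one-line substitution you mention is exactly what the paper has in mind.

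Your main route, via the ``centered expectation'' $\mathbb{E}_{\bm x\sim Q}[d_M(P,\bm x)^2]-\mathbb{E}_{\bm x\sim Q}[d_M(Q,\bm x)^2]$, is a genuinely different (and more laborious) path. The computation is correct: the decomposition $\bm x-\bm\mu_0=(\bm x-\bm\mu_1)+(\bm\mu_1-\bm\mu_0)$, the vanishing cross term, and the trace identity $\operatorname{tr}(\bm\Sigma^{-1}\bm\Sigma)=d$ all go through, and the two $d$'s cancel as you say. The only caveat is that this centered convention is not standard---you have engineered the subtracted term precisely so that $d_M(P,P)=0$, and a reader might reasonably ask why this particular normalisation and not, say, the raw expectation $\mathbb{E}_{\bm x\sim Q}[d_M(P,\bm x)^2]=d+(\bm\mu_0-\bm\mu_1)^\top\bm\Sigma^{-1}(\bm\mu_0-\bm\mu_1)$. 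Since the paper never uses or defines such an integral extension, your elaborate derivation is extra work that the paper simply sidesteps; what it buys you is an explanation of \emph{why} the shared-covariance assumption is needed (the two trace terms only match, and hence cancel, when $\operatorname{Cov}(Q)=\bm\Sigma$), which dovetails nicely with Remark~\ref{remark:equality_of_mahalanobis_and_jeffreys} and the paper's subsequent criticism of the Mahalanobis distance.
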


\quad Many methods, including the conventional GVSE method, use the Mahalanobis distance as a measure of the distance between a point and a distribution. However, the assumption that the covariance matrices of two points are identical is too strong when one wants all points in the embedding space to be Gaussian with different variances. Therefore, we consider the embedding space induced by other distance functions.

\begin{definition}[Kullback–Leibler divergence]
The Kullback–Leibler divergence or KL-divergence $D_{KL}: \mathcal{P}\times\mathcal{P}\to [0,\infty)$ is defined between two Radon–Nikodym densities $p$ and $q$ of $\nu$-absolutely continuous probability measures by
\begin{align}
    D_{KL}[p\|q] \coloneqq \int_\mathcal{X} p\ln\frac{p}{q}d\nu = \int_{\mathcal{X}}p(\bm{x})\ln\frac{p(\bm{x})}{q(\bm{x})}d\bm{x}.
\end{align}
\end{definition}

\quad Because the KL divergence is asymmetric (i.e., $D_{KL}[p\|q] \neq D_{KL}[q\|p]$), the following symmetrization is often used to treat it as a distance function.

\begin{definition}[Jeffreys divergence]
The Jeffreys divergence $D_J:\mathcal{P}\times\mathcal{P}\to [0, \infty)$ is defined between two Radon-Nikodym densities $p$ and $q$ of $\nu$-absolutely continuous probability measures by
\begin{align}
    D_J[p\|q] &\coloneqq D_{KL}[p\|q] + D_{KL}[q\|p] \nonumber \\
    &= \int_\mathcal{X} p\ln\frac{p}{q}d\nu + \int_\mathcal{X} q\ln\frac{q}{p}d\nu \nonumber \\
    &= \int_\mathcal{X} p(\bm{x})\ln\frac{p(\bm{x})}{q(\bm{x})}d\bm{x} + \int_\mathcal{X} q(\bm{x})\ln\frac{q(\bm{x})}{p(\bm{x})}d\bm{x}.
\end{align}
\end{definition}

\begin{proposition}[Closed form of Jeffreys divergence between Gaussian distributions]
\label{prop:closed_form_jeffreys_divergence}
Let $P,Q\in\mathcal{P}$ be two Gaussian distributions with mean vertical vectors $\bm{\mu}_0$, $\bm{\mu}_1 \in \mathbb{R}^d$ and positive-definite covariance matrix $\bm{\Sigma}_0, \bm{\Sigma}_1 \in \mathbb{R}^{d \times d}$.
The closed form of the Jeffreys divergence between $P$ and $Q$ is given as
\begin{align}
    D_J[P\|Q] = \frac{1}{2}(\bm{\mu}_0 - \bm{\mu}_1)^\top(\bm{\Sigma}_0 - \bm{\Sigma}_1)(\bm{\mu}_0 - \bm{\mu}_1) + \frac{1}{2}\mathrm{tr}\left(\bm{\Sigma}_1^{-1}\bm{\Sigma}_0 + \bm{\Sigma}_0^{-1}\bm{\Sigma}_1 - 2\bm{I}_d \right),
\end{align}
where $\bm{I}_d\in\mathbb{R}^{d \times d}$ is the $d$-dimensional identity matrix.
\end{proposition}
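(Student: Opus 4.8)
The plan is to compute the Jeffreys divergence directly from its definition, $D_J[P\|Q] = D_{KL}[P\|Q] + D_{KL}[Q\|P]$, by first establishing the closed form of the one-directional KL divergence between two Gaussians and then symmetrizing. First I would recall (or derive from the definition above) the standard identity
\begin{align}
    D_{KL}[P\|Q] = \frac{1}{2}\left( \ln\frac{|\bm{\Sigma}_1|}{|\bm{\Sigma}_0|} - d + \mathrm{tr}(\bm{\Sigma}_1^{-1}\bm{\Sigma}_0) + (\bm{\mu}_0 - \bm{\mu}_1)^\top\bm{\Sigma}_1^{-1}(\bm{\mu}_0 - \bm{\mu}_1) \right),
\end{align}
which follows by writing $\ln(p/q)$ as the difference of two Gaussian log-densities, taking the expectation under $P$, and invoking the two elementary facts $\mathbb{E}_P[(\bm{x}-\bm{\mu}_0)^\top\bm{A}(\bm{x}-\bm{\mu}_0)] = \mathrm{tr}(\bm{A}\bm{\Sigma}_0)$ and $\mathbb{E}_P[\bm{x}] = \bm{\mu}_0$, together with a completion-of-the-square step for the cross term.

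Next I would obtain $D_{KL}[Q\|P]$ simply by exchanging the subscripts $0 \leftrightarrow 1$ in the identity above, and then add the two expressions. The two log-determinant terms $\ln(|\bm{\Sigma}_1|/|\bm{\Sigma}_0|)$ and $\ln(|\bm{\Sigma}_0|/|\bm{\Sigma}_1|)$ cancel; the two $-d$ terms combine with the trace contributions to give $\mathrm{tr}(\bm{\Sigma}_1^{-1}\bm{\Sigma}_0 + \bm{\Sigma}_0^{-1}\bm{\Sigma}_1 - 2\bm{I}_d)$ after factoring out the $\tfrac12$. This reproduces the second summand of the claimed formula exactly, and the bookkeeping here is routine once the single-direction KL formula is in hand.

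The hard part will be the quadratic (mean-difference) term. Adding the two Mahalanobis-type contributions produces $(\bm{\mu}_0 - \bm{\mu}_1)^\top\bm{\Sigma}_1^{-1}(\bm{\mu}_0 - \bm{\mu}_1)$ from $D_{KL}[P\|Q]$ and $(\bm{\mu}_0 - \bm{\mu}_1)^\top\bm{\Sigma}_0^{-1}(\bm{\mu}_0 - \bm{\mu}_1)$ from $D_{KL}[Q\|P]$, so the weight that emerges naturally from this route is the sum of inverse covariances $\bm{\Sigma}_0^{-1} + \bm{\Sigma}_1^{-1}$. The proposition instead displays the weight $\bm{\Sigma}_0 - \bm{\Sigma}_1$ in this term, so the crux of matching the statement precisely as worded is to reconcile these two matrices: I would treat verifying that the paper's density normalization and sign conventions indeed yield the stated $\bm{\Sigma}_0 - \bm{\Sigma}_1$ factor in place of $\bm{\Sigma}_0^{-1} + \bm{\Sigma}_1^{-1}$ as the central obstacle and the step demanding the most careful justification, since every other term falls out directly from the symmetrization.
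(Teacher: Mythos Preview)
Your approach is exactly the one the paper takes: write down the explicit Gaussian densities, compute $D_{KL}[P\|Q]$ and $D_{KL}[Q\|P]$ in closed form, and add. The log-determinant cancellation and the trace bookkeeping you describe match the paper line for line.

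Regarding the ``hard part'' you flag: there is nothing to reconcile. The paper's own proof ends with
\[
D_J[P\|Q] = \tfrac{1}{2}(\bm{\mu}_0-\bm{\mu}_1)^\top(\bm{\Sigma}_0^{-1}+\bm{\Sigma}_1^{-1})(\bm{\mu}_0-\bm{\mu}_1) + \tfrac{1}{2}\mathrm{tr}(\bm{\Sigma}_1^{-1}\bm{\Sigma}_0 + \bm{\Sigma}_0^{-1}\bm{\Sigma}_1 - 2\bm{I}_d),
\]
i.e.\ exactly the weight $\bm{\Sigma}_0^{-1}+\bm{\Sigma}_1^{-1}$ that you derive. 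The factor $\bm{\Sigma}_0 - \bm{\Sigma}_1$ appearing in the proposition statement is a typographical error in the paper; the derivation (both yours and theirs) gives the sum of inverse covariances, which is also what is required for Remark~\ref{remark:equality_of_mahalanobis_and_jeffreys} to hold. So you should not try to force the quadratic term to match the stated $\bm{\Sigma}_0 - \bm{\Sigma}_1$; that identity is simply false for general positive-definite $\bm{\Sigma}_0,\bm{\Sigma}_1$.
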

\begin{proof}
Because both $P$ and $Q$ are Gaussian distributions, the probability density functions are given by
\begin{align*}
    p(\bm{x}) &= \frac{1}{(2\pi)^{\frac{d}{2}}\left|\bm{\Sigma}_0\right|^{\frac{1}{2}}}\exp\left\{-\frac{1}{2}(\bm{x}-\bm{\mu}_0)^\top\bm{\Sigma}^{-1}_0(\bm{x}-\bm{\mu}_0)\right\}, \\
    q(\bm{x}) &= \frac{1}{(2\pi)^{\frac{d}{2}}\left|\bm{\Sigma}_1\right|^{\frac{1}{2}}}\exp\left\{-\frac{1}{2}(\bm{x}-\bm{\mu}_1)^\top\bm{\Sigma}^{-1}_1(\bm{x}-\bm{\mu}_1)\right\}.
\end{align*}
The closed form of $D_{KL}[P\|Q]$ is given as
\begin{align}
    D_{KL}[P\|Q] &= \mathbb{E}_P[\ln p(\bm{x}) - \ln q(\bm{x})] \nonumber\\
    &= \mathbb{E}_P\left[\frac{1}{2}\ln\frac{|\bm{\Sigma}_1|}{|\bm{\Sigma}_0|} - \frac{1}{2}(\bm{x}-\bm{\mu}_0)^\top\bm{\Sigma}_0^{-1}(\bm{x}-\bm{\mu}_0) + \frac{1}{2}(\bm{x}-\bm{\mu}_1)^\top\bm{\Sigma}^{-1}_1(\bm{x}-\bm{\mu}_1) \right] \nonumber\\
    &= \frac{1}{2}\ln\frac{|\bm{\Sigma}_1|}{|\bm{\Sigma}_0|} - \frac{1}{2}\mathrm{tr}\left(\bm{I}_d\right) + \frac{1}{2}\left\{(\bm{\mu}_0 - \bm{\mu}_1)^\top\bm{\Sigma}^{-1}_1(\bm{\mu}_0-\bm{\mu}_1) + \mathrm{tr}\left(\bm{\Sigma}_1^{-1}\bm{\Sigma}_0\right)\right\}
    \label{eq:closed_form_kl_p_to_q},
\end{align}
where $\mathbb{E}_P[\cdot]$ expresses the expected value for $P$. Similarly,
\begin{align}
    D_{KL}[Q\|P] = \frac{1}{2}\ln\frac{|\bm{\Sigma}_0|}{|\bm{\Sigma}_1|} - \frac{1}{2}\mathrm{tr}\left(\bm{I}_d\right) + \frac{1}{2}\left\{(\bm{\mu}_1 - \bm{\mu}_0)^\top\bm{\Sigma}^{-1}_0(\bm{\mu}_1-\bm{\mu}_0) + \mathrm{tr}\left(\bm{\Sigma}_0^{-1}\bm{\Sigma}_1\right)\right\}. \label{eq:closed_form_kl_q_to_p}
\end{align}
From Eq.~\eqref{eq:closed_form_kl_p_to_q} and \eqref{eq:closed_form_kl_q_to_p}, we have
\begin{align*}
    D_{KL}[P\|Q] + D_{KL}[Q\|P] &= - \frac{1}{2}\mathrm{tr}(\bm{I}_d) + \frac{1}{2}(\bm{\mu}_0 - \bm{\mu}_1)^\top(\bm{\Sigma}_0^{-1}+\bm{\Sigma}_1^{-1})(\bm{\mu}_0 - \bm{\mu}_1) + \frac{1}{2}\mathrm{tr}(\bm{\Sigma}_1^{-1}\bm{\Sigma}_0 + \bm{\Sigma}_0^{-1}\bm{\Sigma}_1) \\
    &= \frac{1}{2}(\bm{\mu}_0 - \bm{\mu}_1)^\top(\bm{\Sigma}_0^{-1}+\bm{\Sigma}_1^{-1})(\bm{\mu}_0 - \bm{\mu}_1) + \frac{1}{2}\mathrm{tr}(\bm{\Sigma}_1^{-1}\bm{\Sigma}_0 + \bm{\Sigma}_0^{-1}\bm{\Sigma}_1 - 2\bm{I}_d).
\end{align*}
\end{proof}

\begin{remark}
\label{remark:equality_of_mahalanobis_and_jeffreys}
From Proposition~\ref{prop:closed_form_mahalanobis_distance} and Proposition~\ref{prop:closed_form_jeffreys_divergence}, we can see that $d^2_M(P,Q) = 2 D_J[P\|Q]$ if $P$ and $Q$ are the Gaussian distributions with identical covariance matrix.
\end{remark}

\begin{remark}
It is obvious that VSE with Mahalanobis distance induces variance-agnostic embedded space.
\end{remark}

\quad Therefore, the symmetric KL divergence content in the situation where the variance-covariance matrices of the probability distributions being compared are perfectly matched coincides with the Mahalanobis distance. This means that estimating each parameter using the Mahalanobis distance has the risk of not estimating the parameters well. Specifically, when measuring the distance between distribution $P$ and distribution $Q$, the calculation is performed under the assumption that the variance of distribution $P$ coincides with that of distribution $Q$. When measuring the distance between distribution $P$ and distribution $R$, the calculation is performed under the assumption that the variance of distribution $P$ coincides with that of distribution $R$. In other words, the parameters included in the model are estimated in a situation where the variance of distribution $P$ changes in a variety of ways (variance ignorance), depending on the pairs. In this situation, there is a risk that stable parameter estimation cannot be performed.

\quad Theoretical analysis suggests that care should be taken when adopting the Mahalanobis distance as a loss function (to measure the distance between a point and a probability distribution) in machine learning methods.

\begin{definition}[Wasserstein Distance]
Given two probability distributions $P$ and $Q$ on two Polish spaces $(\mathcal{X}, d_\mathcal{X})$ and $(\mathcal{Y}, d_\mathcal{Y})$ and a positive lower semi-continuous cost function $c, \mathcal{X}\times\mathcal{Y} \to \mathbb{R}^+$, optimal transport focuses on solving the following optimization problem:
\begin{align}
    \inf_{\pi\in\Pi(P,Q)}\inf_{\mathcal{X}\times\mathcal{Y}} c(\bm{x},\bm{y})d\pi(\bm{x},\bm{y}), \label{eq:optimal_transport}
\end{align}
where $\Pi(P,Q)$ is the set of measures on $\mathcal{X}\times\mathcal{Y}$ with marginals $P$ and $Q$.
When $\mathcal{X}$ and $\mathcal{Y}$ are subspaces in $\mathbb{R}^d$ and $c(\bm{x},\bm{y})=\|\bm{x}-\bm{y}\|^l$, where $\|\bm{x}\|^l$ is $l$-norm for vector $\bm{x}$ with $l \geq 1$, Eq.~\eqref{eq:optimal_transport} induces a distance over the set of measures with finite moment of order $l$, known as the $l$-Wasserstein distance $W_l$:
\begin{align}
    W_l(P,Q) \coloneqq \left(\inf_{\pi\in\Pi(P, Q)}\int_{\mathcal{X}\times\mathcal{Y}}\|\bm{x}-\bm{y}\|^l d\pi(\bm{x},\bm{y})\right)^{\frac{1}{l}},
\end{align}
or equivalently
\begin{align}
    W^l_l(P, Q) \coloneqq \inf_{\bm{X}\sim P,\bm{Y}\sim Q}\mathbb{E}\left[\|\bm{X} - \bm{Y}\|^l\right].
\end{align}
\end{definition}

\begin{proposition}[Closed form of the Wasserstein Distance Between Gaussian Distributions~\cite{dowson1982frechet,takatsu2010wasserstein}]
Let $P,Q \in \mathcal{P}$ be two Gaussian distributions with mean vertical vectors $\bm{\mu}_0$ and $\bm{\mu}_1 \in \mathbb{R}^d$ and positive-definite covariance matrix $\bm{\Sigma}\in\mathbb{R}^{d \times d}$, where $d\in\mathbb{N}$.
The closed form of the $2$-Wasserstein distance between $P$ and $Q$ is given as
\begin{align}
    W^2_2(P,Q) \coloneqq (\bm{\mu}_0 - \bm{\mu}_1)^{\top} (\bm{\mu}_0 - \bm{\mu}_1) + \mathrm{tr}\left(\bm{\Sigma}_0 + \bm{\Sigma}_1 - 2\left(\bm{\Sigma}_0^{\frac{1}{2}}\bm{\Sigma}_1\bm{\Sigma}_0^{\frac{1}{2}}\right)\right).
\end{align}
\end{proposition}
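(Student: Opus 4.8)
The plan is to (i) separate out the contribution of the means, (ii) reduce the infimum over all couplings to one over jointly Gaussian couplings, (iii) recast the centered problem as a semidefinite optimization over the cross-covariance matrix, and (iv) solve it explicitly through the optimal linear transport map.

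For step (i), I would write an arbitrary coupling $(\bm X,\bm Y)$ with $\bm X\sim P$ and $\bm Y\sim Q$ as $\bm X=\bm\mu_0+\bm X_0$, $\bm Y=\bm\mu_1+\bm Y_0$ with $\bm X_0,\bm Y_0$ centered, so that $\mathbb E\|\bm X-\bm Y\|^2=\|\bm\mu_0-\bm\mu_1\|^2+\mathbb E\|\bm X_0-\bm Y_0\|^2$, the cross term vanishing because $\mathbb E[\bm X_0]=\mathbb E[\bm Y_0]=\bm 0$. Since couplings of $P$ and $Q$ correspond bijectively to couplings of the centered laws, this identity gives $W_2^2(P,Q)=\|\bm\mu_0-\bm\mu_1\|^2+W_2^2\!\left(\mathcal N(\bm 0,\bm\Sigma_0),\mathcal N(\bm 0,\bm\Sigma_1)\right)$, reducing everything to the centered case.

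Next I would handle the centered case: expanding the square gives $\mathbb E\|\bm X_0-\bm Y_0\|^2=\mathrm{tr}(\bm\Sigma_0)+\mathrm{tr}(\bm\Sigma_1)-2\,\mathrm{tr}(\bm C)$, where $\bm C=\mathbb E[\bm X_0\bm Y_0^\top]$ is the cross-covariance. Because this cost depends on the joint law only through its second moments and every $\bm C$ with $\left(\begin{smallmatrix}\bm\Sigma_0 & \bm C\\ \bm C^\top & \bm\Sigma_1\end{smallmatrix}\right)\succeq\bm 0$ is realized by the centered Gaussian with that covariance, the infimum over all couplings equals $\mathrm{tr}(\bm\Sigma_0)+\mathrm{tr}(\bm\Sigma_1)-2\max\{\mathrm{tr}(\bm C):\left(\begin{smallmatrix}\bm\Sigma_0 & \bm C\\ \bm C^\top & \bm\Sigma_1\end{smallmatrix}\right)\succeq\bm 0\}$. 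Using the Schur complement (valid since $\bm\Sigma_0\succ\bm 0$), the constraint becomes $\bm\Sigma_1-\bm C^\top\bm\Sigma_0^{-1}\bm C\succeq\bm 0$, and a von Neumann trace inequality argument then yields $\mathrm{tr}(\bm C)\le\mathrm{tr}\!\left((\bm\Sigma_0^{1/2}\bm\Sigma_1\bm\Sigma_0^{1/2})^{1/2}\right)$. Equality holds at the linear map $\bm T=\bm\Sigma_0^{-1/2}(\bm\Sigma_0^{1/2}\bm\Sigma_1\bm\Sigma_0^{1/2})^{1/2}\bm\Sigma_0^{-1/2}$, for which $\bm T\succ\bm 0$ and $\bm T\bm\Sigma_0\bm T=\bm\Sigma_1$ (so $\bm Y_0:=\bm T\bm X_0\sim\mathcal N(\bm 0,\bm\Sigma_1)$), while $\bm C^\star:=\mathbb E[\bm X_0\bm Y_0^\top]=\bm\Sigma_0\bm T$ has $\mathrm{tr}(\bm C^\star)=\mathrm{tr}\!\left((\bm\Sigma_0^{1/2}\bm\Sigma_1\bm\Sigma_0^{1/2})^{1/2}\right)$ after a cyclic permutation of the trace. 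Substituting back and using linearity of the trace gives $W_2^2(P,Q)=\|\bm\mu_0-\bm\mu_1\|^2+\mathrm{tr}\!\left(\bm\Sigma_0+\bm\Sigma_1-2(\bm\Sigma_0^{1/2}\bm\Sigma_1\bm\Sigma_0^{1/2})^{1/2}\right)$, the claimed closed form. (Alternatively, $\bm T$ is the gradient of the convex quadratic $\bm x\mapsto\tfrac12\bm x^\top\bm T\bm x$ and pushes $\mathcal N(\bm 0,\bm\Sigma_0)$ forward to $\mathcal N(\bm 0,\bm\Sigma_1)$, so Brenier's theorem identifies it as the optimal map and avoids the semidefinite argument.)

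The step I expect to be the main obstacle is (iv): establishing rigorously the trace inequality $\mathrm{tr}(\bm C)\le\mathrm{tr}((\bm\Sigma_0^{1/2}\bm\Sigma_1\bm\Sigma_0^{1/2})^{1/2})$ over the block-PSD constraint together with its tightness, i.e., confirming that the linear coupling is optimal among \emph{all} couplings, not merely among Gaussian ones. The reduction in step (ii) is what makes this last point clean, so I would be careful to justify both that the transport cost is a function of the second moments alone and that every feasible cross-covariance is attained by a Gaussian coupling with the prescribed marginals.
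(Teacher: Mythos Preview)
The paper does not supply its own proof of this proposition; it simply records the formula and defers to the cited references \cite{dowson1982frechet,takatsu2010wasserstein}. Your sketch is correct and is essentially the classical argument from those references: translate away the means, observe that the quadratic cost depends on a coupling only through its second moments so that Gaussian couplings suffice, and then optimize $\mathrm{tr}(\bm C)$ over the block positive-semidefinite constraint, with the optimum attained by the linear Monge map $\bm T=\bm\Sigma_0^{-1/2}(\bm\Sigma_0^{1/2}\bm\Sigma_1\bm\Sigma_0^{1/2})^{1/2}\bm\Sigma_0^{-1/2}$. The alternative Brenier-based justification you mention is also standard and equally valid.

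One remark worth making explicitly in your write-up: your derivation correctly produces the term $\mathrm{tr}\bigl(\bm\Sigma_0+\bm\Sigma_1-2(\bm\Sigma_0^{1/2}\bm\Sigma_1\bm\Sigma_0^{1/2})^{1/2}\bigr)$, whereas the displayed formula in the proposition as stated omits the outer square root on $\bm\Sigma_0^{1/2}\bm\Sigma_1\bm\Sigma_0^{1/2}$. The version used later in the paper, in Eq.~\eqref{eq_d_wassestein}, does include the square root, so the proposition display is a typographical slip and your formula is the intended one.
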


\quad Based on the above theoretical basis as well, the proposed method assumes a multidimensional Gaussian distribution for both images and tags, allowing the use of various measures such as KL divergence content and $2$-Wassestein distance.

\section{Geometry of Embedded Space}
\label{sec:natural_params}
In general, it is known that the space of probability distributions is not an Euclidean space, but rather constitutes a Riemannian manifold~\cite{amari2000methods,amari2016information}.
This means that the space of probability distributions is a non-Euclidean space, which does not guarantee the validity of general vector operations.
Therefore, even if the embedding vectors $\bm{\mu}$ and $\bm{\Sigma}$ are obtained by DGVSE, they cannot be used directly for applications that combine multiple individual tags or embedded representations of images such as tags embedding and image retrieval.
Thus, it is necessary to apply appropriate transformations to the embedded vectors to obtain new parameter vectors to allow operations such as those allowed in Euclidean space.
Geometrically, these transformations are called coordinate transformations, and parameter coordinates that allow linear operations are called affine coordinate systems.

\quad Consider an exponential family, the generalization of the Gaussian distributions, expressed in the following form:
\begin{align}
    p(\bm{x};\bm{\theta}) \coloneqq \exp\left\{\bm{\theta}^{\top} \bm{z} + k(\bm{x}) - \psi(\bm{\theta}) \right\}, \label{eq:exponential_family}
\end{align}
where $\bm{x}$ is a random variable, $\bm{\theta}=(\theta^{(1)},\dots,\theta^{(n)})^{\top}$ is an $n$-dimensional vector parameter, $h_{i}(\bm{x})$ are $n$ functions of $\bm{x}$, which are linearly independent, $k(\bm{x})$ is a function of $\bm{x}$, and $\psi$ corresponds to the normalization factor.
Here, let $\bm{z} = (z_1,\dots,z_i,\dots,z_n)^{\top} = (h_1(\bm{x}),\dots,h_i(\bm{x}),\dots,h_n(\bm{x}))^{\top}$ be a new vector random variable and $d\nu(\bm{z})$ be a measure in the sample space $\cal{Z} \subseteq \mathbb{R}^{n}$ defined as
\begin{align}
    d\nu(\bm{z}) \coloneqq \exp\{k(\bm{x})\}d\bm{x}.
\end{align}
Then, Eq.~\eqref{eq:exponential_family} is rewritten as
\begin{align}
    p(\bm{x};\bm{\theta})d\bm{x} &= \exp\left\{\bm{\theta}^{\top} \bm{z} - \psi(\bm{\theta})\right\}d\nu(\bm{z}), \\
    p(\bm{z};\bm{\theta}) &= \exp\left\{\bm{\theta}^{\top} \bm{z} - \psi(\bm{\theta})\right\}.
\end{align}

The family of distributions $\mathcal{M}=\{p(\bm{z};\bm{\theta})\}$ forms a $J$-dimensional manifold, where $\bm{\theta}$ is a coordinate system.
Because $\psi(\bm{\theta})$ is a normalization factor, we have
\begin{align}
    \int_Z p(\bm{z};\bm{\theta})d\nu(\bm{z}) = 1,
\end{align}
and
\begin{align}
    \psi(\bm{\theta}) = \log \int_{\cal{Z}} \exp(\bm{\theta}^{\top} \bm{z})d\nu(\bm{z}).
\end{align}
Here, a dually flat Riemannian structure is introduced in $\mathcal{M}$ using $\psi(\bm{\theta})$.
The affine coordinate system is $\bm{\theta}$, which is called the natural parameter, and the dual affine parameter is given by the Legendre transformation $\bm{\theta}^* = \nabla\psi(\bm{\theta})$, which is the expectation of $\bm{z}$ denoted by $\bm{\eta}=(\eta_1,\dots,\eta_n)^{\top}$ as
\begin{align}
    \bm{\eta} \coloneqq \bm{\theta}^* = \mathbb{E}[\bm{z}] = \int_Z \bm{z} p(\bm{z};\bm{\theta})d\nu(\bm{z}).
\end{align}
Here, $\bm{\eta}$ is called the expectation parameter.
Hence, $\bm{\theta}$ and $\bm{\eta}$ are two affine coordinate systems connected by the Legendre transformation.

\begin{example}[Univariate Gaussian distribution]
The probability density function of the Gaussian distribution with mean $\mu$ and variance $\sigma^2$ is given as
\begin{align}
    p(x; \mu, \sigma) = \frac{1}{\sqrt{2\pi}\sigma}\exp\left\{-\frac{(x-\mu)^2}{2\sigma^2}\right\}. \label{eq:univariate_gaussian}
\end{align}
Let $\bm{\xi} = (\xi^{(1)}, \xi^{(2)})^{\top}$ where
\begin{align*}
    \xi^{(1)} &= h_1(x) = x, \\
    \xi^{(2)} &= h_2(x) = x^2.
\end{align*}
Here, we can see that $\xi^{(1)}$ and $\xi^{(2)}$ are dependent, but are linearly independent. We further introduce new parameters $\bm{\theta} = (\theta^{(1)}, \theta^{(2)})^{\top}$ as
\begin{align*}
    \theta^{(1)} &= \frac{\mu}{\sigma^2}, \\
    \theta^{(2)} &= -\frac{1}{2\sigma^2}.
\end{align*}
Then, Eq~\eqref{eq:univariate_gaussian} is written in the standard form as
\begin{align}
    p(\bm{\xi}; \bm{\theta}) = \exp\{\bm{\theta}^{\top} \bm{\xi} - \psi(\bm{\theta})\}.
\end{align}
The convex function $\psi(\bm{\theta})$ is given by
\begin{align*}
    \psi(\bm{\theta}) &= \frac{\mu^2}{2\sigma^2} + \log\left(\sqrt{2\pi}\sigma\right) \\
    &= -\frac{(\theta^{(1)})^2}{4\theta^{(2)}} - \frac{1}{2}\log(-\theta^{(2)}) + \frac{1}{2}\log\pi.
\end{align*}
Finally, the dual affine coordinates $\bm{\eta}$ are given as
\begin{align}
    \eta_1 = \mu, \quad \eta_2 = \mu^2 + \sigma^2.
\end{align}
\end{example}

\section{Methodology}
The DGVSE model assumes a multidimensional Gaussian distribution behind embedded representations of full-body outfit images and attributes and allows embedding each of them (as a probability distribution) in the same space while also considering the spread of meaning. The points to identify the model from the conventional GVSE are: 1) it is an end-to-end model, and 2) it considers image information when learning embedded representations of words. In addition, based on the theoretical basis in section~\ref{sec:theory_mahalanobis}, 3) it can estimate not only words but also embedded representations of images as probability distributions, and 4) the distance included in the loss function is not Mahalanobis distance. Moreover, as mentioned in section~\ref{sec:natural_params}, 5) natural parameters are introduced when combining multiple individual tag distributions.

\subsection{Model Architecture}
The model structure is shown in Figure~\ref{fig_structure}.

\begin{figure}[ht]
\centering
\includegraphics[width=0.95\linewidth]{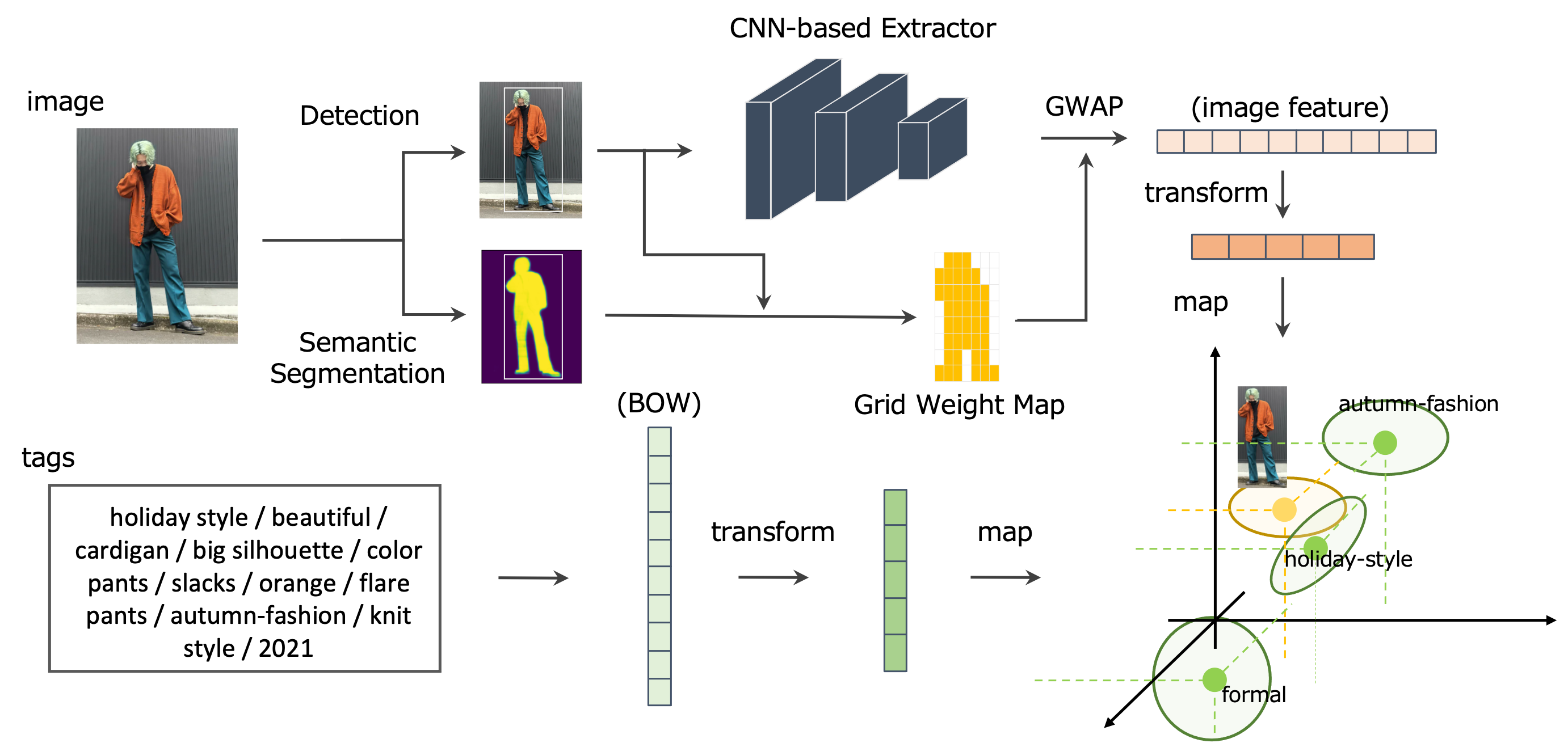}
\caption{Structure of a prototype of our dual Gaussian visual-semantic embedding model proposal \label{fig_structure}}
\end{figure}

\quad The basic structure is based on the VSE in the fashion intelligence system~\cite{Shimizu2022_FashionIntelligenceSystem}. We then assume a multidimensional Gaussian distribution behind the embedded representations of both images and tags. This allows us to estimate the mean and variance (semantic spread) for the embedded representation of each image and tag. This model solves the following problems of the conventional GVSE model: 1) it is not an end-to-end learning method, 2) it only looks at the co-occurrence of words and ignores image information when learning the embedded representation (mean) of words, and 3) it assumes a multidimensional Gaussian distribution only for words (hence the problem of measuring the distance between a point and the probability distribution using the Mahalanobis distance). In particular, the third problem is theoretically presented in the abovementioned section~\ref{sec:theory_mahalanobis}.

\subsection{Parameter Optimization}
The dataset for this study consists of a single full-body outfit image to which multiple tags are assigned.
First, embedding (foreground-centered learning) based on convolutional neural network (CNN) and grid weight map is performed on an arbitrary image $I$ to obtain an image embedded representation of foreground $\mathrm{\mathbf{x}} \sim \mathcal{N}(\bm{\mu}_{I}, \bm{\Sigma}_{I})$.
Here, $\mathrm{\mathbf{x}}, \bm{\mu}_{I} \in \mathbb{R}^{d}$, $\bm{\Sigma}_{I} \in \mathbb{R}^{d \times d}$, where $d$ is the dimension of the embedded space, $\bm{\mu}_{I}$ is mean vertical vector, and $\bm{\Sigma}_{I}$ is the assumed spherical covariance matrix for the image $I$. Furthermore, the tags set $T$ assigned to an arbitrary image $I$ is embedded to obtain the tags embedded representation $\mathrm{\mathbf{v}} \sim \mathcal{N}(\bm{\mu}_{T}, \bm{\Sigma}_{T})$.
Here, $\mathrm{\mathbf{v}}, \bm{\mu}_{T} \in \mathbb{R}^{d}$, $\bm{\Sigma}_{T} \in \mathbb{R}^{d \times d}$, where $\bm{\mu}_{T}$ is mean vertical vector and $\bm{\Sigma}_{T}$ is the assumed spherical covariance matrix for the tags $T$.

\quad In computing this tag embedding representation $\mathrm{\mathbf{v}}$, the probability distributions of embedding representations for the individual tags in arbitrary tags $T$ are combined using the abovementioned method that introduces natural parameters. We now define a tag set $T = \{t_1, \cdots, t_n, \cdots, t_{N_{T}}\}$ consisting of all $N_{T}$ individual tags. Then, the natural parameters $\Theta_n := (\bm{\theta}^{(1)}_{n}, \bm{\theta}^{(2)}_{n})^{\top} \in \mathbb{R}^{d} + \mathbb{R}^{d \times d}$ for the individual tag embedding representation $\mathrm{\mathbf{a}}_{t_n} \sim \mathcal{N}(\bm{\mu}_{t_n}, \bm{\Sigma}_{t_n})$ are calculated by the following Eq.~\eqref{eq_tags_natural_params1}-\eqref{eq_tags_natural_params2}:
\begin{eqnarray}
\bm{\theta}^{(1)}_{n} &=& \bm{\Sigma}_{t_n}^{-1} \bm{\mu}_{t_n}, \label{eq_tags_natural_params1} \\
\bm{\theta}^{(2)}_{n} &=& -\frac{1}{2} \bm{\Sigma}_{t_n}^{-1}. \label{eq_tags_natural_params2}
\end{eqnarray}
Here, $\mathrm{\mathbf{a}}_{t_n}, \bm{\mu}_{t_n} \in \mathbb{R}^{d}$, $\bm{\Sigma}_{t_n} \in \mathbb{R}^{d \times d}$, where $\bm{\mu}_{t_n}$ is a mean vertical vector and $\bm{\Sigma}_{t_n}$ is assumed a spherical covariance matrix for the individual tag $t_n$. Furthermore, natural parameters are calculated for all individual tags in $T$, and their centroid $\Theta_{T}$ is calculated. Using the calculated centroid $\Theta_{T} := (\bm{\theta}^{(1)}_{T}, \bm{\theta}^{(2)}_{T})^{\top} \in \mathbb{R}^{d} + \mathbb{R}^{d \times d}$, the following Eq.~\eqref{eq_tags_embedding_mu}-\eqref{eq_tags_embedding_cov} gives the parameters included in Gaussian distribution $\mathcal{N}(\bm{\mu}_{T}, \bm{\Sigma}_{T})$ for the embedded representation of the tag set $T$:
\begin{eqnarray}
\bm{\mu}_{T} &=& -\frac{1}{2} {\bm{\theta}^{(2)}_{T}}^{-1} \bm{\theta}^{(1)}_{T}, \label{eq_tags_embedding_mu} \\
\bm{\Sigma}_{T} &=& -\frac{1}{2} {\bm{\theta}^{(2)}_{T}}^{-1}. \label{eq_tags_embedding_cov}
\end{eqnarray}

\quad This method of computation, which introduces natural parameters, allows the distribution to be synthesized while accurately accounting for the variance of the elements (individual tags) that make up the set (tags).

\quad The basic policy of learning DGVSE is to learn so that the probability distribution of embedded representations in the image and the probability distribution of embedded representations in terms of tags attached to the image are close. Therefore, the parameter estimation is achieved by optimizing the following contrastive loss Eq.~\eqref{eq_loss}~\cite{Shimizu2022_FashionIntelligenceSystem, VSEpp,  HAN2017_FASHIONCONCEPTDISCOVERY}:

\begin{eqnarray}
\label{eq_loss}
\mathcal{L}(\mathrm{O}) &=& \sum \mathrm{max} \left(0, \mathit{m} + \mathit{d}(\mathrm{\mathbf{x}}^{+}, \mathrm{\mathbf{v}}^{+}) - \mathit{d}(\mathrm{\mathbf{x}}^{+}, \mathrm{\mathbf{v}}^{-}) \right) + \sum \mathrm{max} \left(0, \mathit{m} + \mathit{d}(\mathrm{\mathbf{v}}^{+}, \mathrm{\mathbf{x}}^{+}) - \mathit{d}(\mathrm{\mathbf{v}}^{-}, \mathrm{\mathbf{x}}^{+}) \right),
\end{eqnarray}
where $\mathrm{O} = \{ V, \mathrm{\mathbf{W}}_I, \mathrm{\mathbf{W}}_T \}$ is a set of target parameters to be optimized, $V$ is a parameter set contained in CNN, $\mathrm{\mathbf{W}}_I \in \mathbb{R}^{d \times r}$ is a transform matrix from an image feature vector obtained from a CNN-based extractor to the image embedded representation ($r$ is the number of dimensions of the final convolutional layer of the CNN), $\mathrm{\mathbf{W}}_T \in \mathbb{R}^{d \times s}$ is a transform matrix from a bag-of-words representation for tags $T$ to a tags embedded representation ($s$ is the number of tags in the entire dataset), $m$ is a margin, $d(\mathbf{x}, \mathbf{y})$ indicates the distance between vectors $\mathbf{x}$ and $\mathbf{y}$, and $\beta > 0$ is a positive hyperparameter to adjust the importance of the background regularization term. The superscript sign $+$ of $A^{+}$ indicates that $A$ is a variable related to the positive sample, and $-$ of $A^{-}$ indicates that $A$ is a variable related to the negative sample.

\quad Because we assume a multidimensional Gaussian distribution behind both the image and the embedded representation of the tag, the distance measure in the embedding space must be able to consider not only the mean but also the covariance matrix. Therefore, the following three distance measures are adopted in this study. While the Mahalanobis distance $d_{\mathrm{M}}(\mathrm{\mathbf{x}}, \mathrm{\mathbf{y}})$ is also adopted for comparison, the KL divergence $d_{\mathrm{KL}}(\mathrm{\mathbf{x}}, \mathrm{\mathbf{y}})$ and $2$-Wassestein distance $d_{\mathrm{W^2_2}}(\mathrm{\mathbf{x}}, \mathrm{\mathbf{y}})$ are adopted based on theoretical grounds to measure the distance between vectors $\mathrm{\mathbf{x}}$ and $\mathrm{\mathbf{y}}$:

\begin{eqnarray}
\label{eq_d_mahalanobis}
    d_{\mathrm{M}}(\mathrm{\mathbf{x}}, \mathrm{\mathbf{v}}) &=& \sqrt{(\bm{\mu}_{I} - \bm{\mu}_{T} )^\top \Sigma^{-1} ( \bm{\mu}_{I} - \bm{\mu}_{T} )}, \\
\label{eq_d_kldivergence}
    d_{\mathrm{KL}}(\mathrm{\mathbf{x}}, \mathrm{\mathbf{v}}) &:=& D_{\mathrm{KL}}(\mathrm{\mathbf{x}}, \mathrm{\mathbf{v}}) = \frac{1}{2}\ln\frac{|\bm{\Sigma}_{I}|}{|\bm{\Sigma}_{T}|} - \frac{1}{2}\mathrm{tr}\left(\bm{I}_d \right) + \frac{1}{2}\left\{(\bm{\mu}_{T} - \bm{\mu}_{I})^\top\bm{\Sigma}^{-1}_{I}(\bm{\mu}_{T}-\bm{\mu}_{I}) + \mathrm{tr}\left(\bm{\Sigma}_{I}^{-1}\bm{\Sigma}_{T}\right)\right\}, \\
\label{eq_d_wassestein}
    d_{\mathrm{W^2_2}}(\mathrm{\mathbf{x}}, \mathrm{\mathbf{v}}) &:=& W^2_2(\mathrm{\mathbf{x}}, \mathrm{\mathbf{v}}) = (\bm{\mu}_{I}-\bm{\mu}_{T})^{\top}(\bm{\mu}_{I}-\bm{\mu}_{T}) + \mathrm{tr}\left( \bm{\Sigma}_{I}+\bm{\Sigma}_{T}-2\left(\bm{\Sigma}_{I}^{\frac{1}{2}} \bm{\Sigma}_{T}  \bm{\Sigma}_{I}^{\frac{1}{2}} \right)^{\frac{1}{2}} \right),
\end{eqnarray}
where the joint covariance matrix $\Sigma$ in Eq.~\eqref{eq_d_mahalanobis} is defined as $\frac{1}{2}\left(\Sigma_{I} + \Sigma_{T}\right)$ for convenience in this study.

\quad The assumption of probability distributions on both sides is expected to solve the problem of dispersion ignorance that occurs when the Mahalanobis distance is used to measure the distance between a point and a distribution, which is adopted in GVSE. In subsequent sections of the experiment, we will observe and discuss how the results change with each distance measure. One of the main contributions of this study is the detailed theoretical analysis of the differences between the three typical distance scales adopted, based not only on the observed experimental results, but also on the results.

\quad DGVSE is trained through the above process of parameter optimization. By considering the probability distribution of embedded representations of the resulting images and tags, DGVSE not only retains the useful functions of the traditional fashion intelligence system, such as search and sorting, but also enables the interpretation of abstract fashion terms related to dispersion, which is not possible with the previous methods. 

\section{Experimental Analysis}
To evaluate the effectiveness of the proposed DGVSE model, we applied it to actual posted full-body outfit image data and the tags information attached to each image accumulated in WEAR~\cite{app_wear}, a fashion coordination application including SNS features.

\subsection{Experimental Settings}
The number of full-body outfit images in the experimental data was 15,740, and the number of unique tags attached to all images was 1,104. All models in the target images were female, and the backgrounds of all images contained relatively little noise. An example of a full-body outfit image and its tags is shown in Figure~\ref{fig_sample_dataset} below.

\begin{figure}[ht]
\centering
\includegraphics[width=0.95\linewidth]{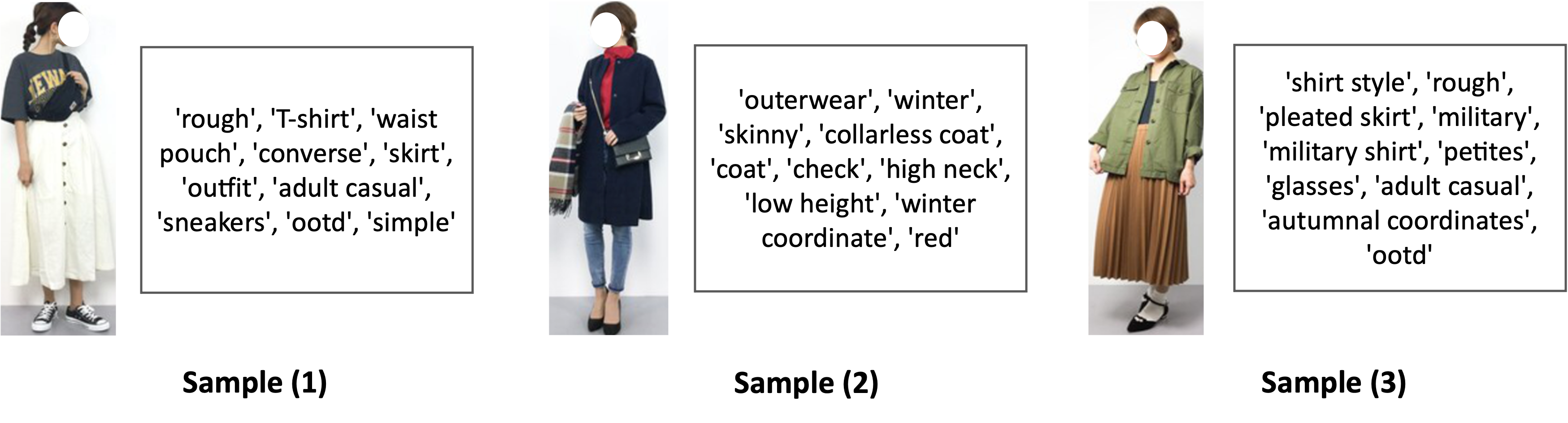}
\caption{Example of samples in the target dataset \label{fig_sample_dataset}}
\end{figure}

\quad The embedded representation dimension included in the VSE model $d$ was set to 64. The learning rate was 0.001, and the number of epochs was 50. The batch size was 32, and the margin $m$ was set to 0.2. We used GoogleNet and Inception V3 pre-trained on ImageNet~\cite{deng2009imagenet} as the extractor to assess the impact of CNN. For preprocessing, SSD (extractor: MobileNet V2~\cite{mobilenetv2}) trained on Open Images~\cite{OpenImages2}) was used for object detection, and FCN (extractor: ResNet~\cite{resnet}) trained on MS-COCO~\cite{COCO} was used for semantic segmentation.

\subsection{Attribute Mapping}
The average values of embedded tags obtained from the proposed model were compressed by t-SNE~\cite{Maaten2008_tSNE} and shown in a two-dimensional map in Figure~\ref{fig_result_mapping} below. However, because it is not possible to see the results of mapping all tags because of the scope of the figure, some representative abstract attributes are extracted and mapped.

\begin{figure}[ht]
 \begin{center}
  \subfigure[Mahalanobis distance]{
   \includegraphics[width=0.98\columnwidth]{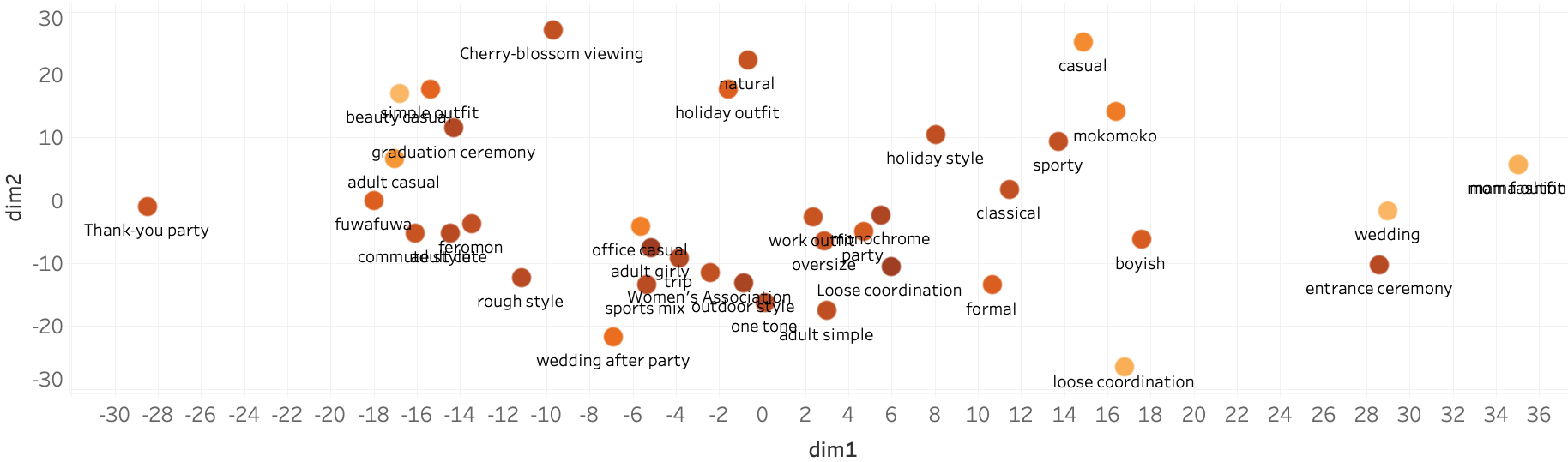}
  } \\
  \subfigure[KL-divergence]{
   \includegraphics[width=0.98\columnwidth]{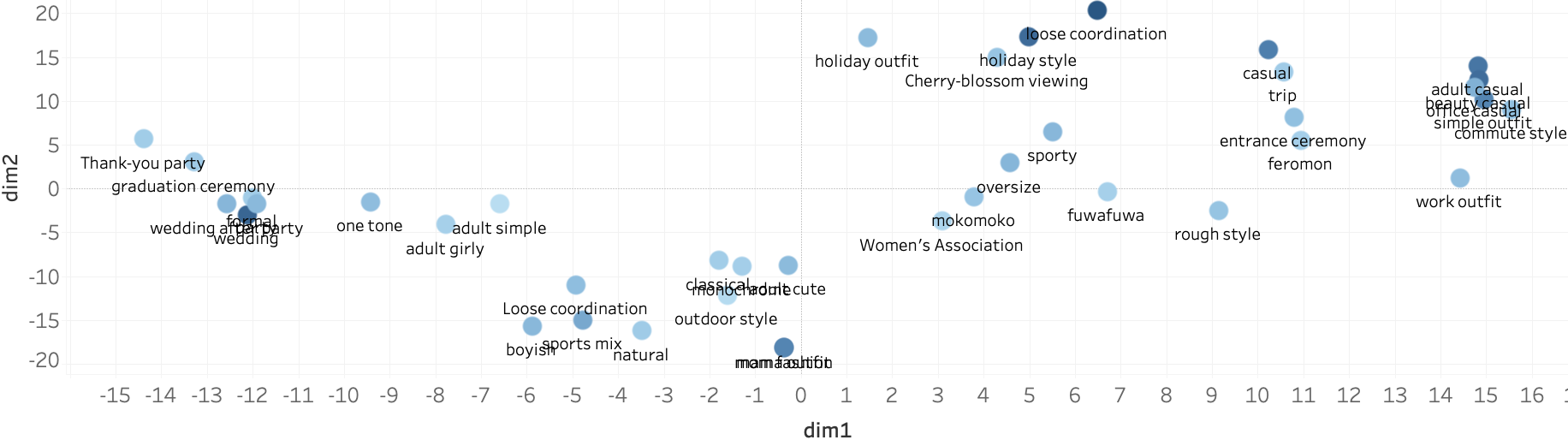}
  } \\
  \subfigure[$2$-Wassestein distance]{
   \includegraphics[width=0.98\columnwidth]{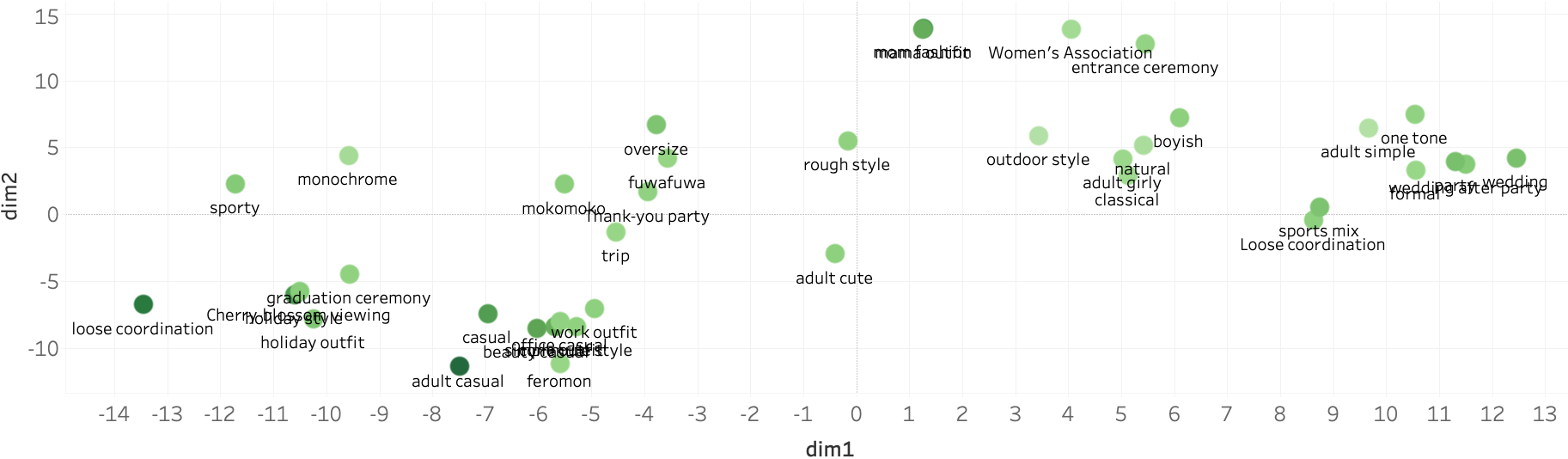}
  }
  \caption{Mapping the result of compressing the average of tag embedded representations \label{fig_result_mapping}}
 \end{center}
\end{figure}

\quad By observing these figures, it is possible to grasp the semantic relationship of each expression, taking the image information into consideration. Because the results of estimation by the three distance measures are shown, it is possible to consider the validity and interrelationships of each distance. First, in the results obtained from KL divergence and $2$-Wassestein distance models, for example, the pairs of ``wedding'' and ``wedding after-party,'' ``mom outfit'' and ``mom fashion,'' ``commute style'' and ``work outfit,'' ``holiday outfit'' and ``holiday style,'' and ``wedding'' and ``wedding after party'' that have similar meaning to each other are in close proximity. Conversely, for the Mahalanobis distance model, these pairs are in many cases not near each other, so the validity of the results is questionable. These results suggest that the adoption of the Mahalanobis distance may be risky for the model and the problem under study.

\quad For a more detailed interpretation, Figure~\ref{fig_result_mapping_zoom} below shows an enlarged map (KL divergence) of the area around many of the tags associated with ``casual.''

\begin{figure}[ht]
\centering
\includegraphics[width=0.95\linewidth]{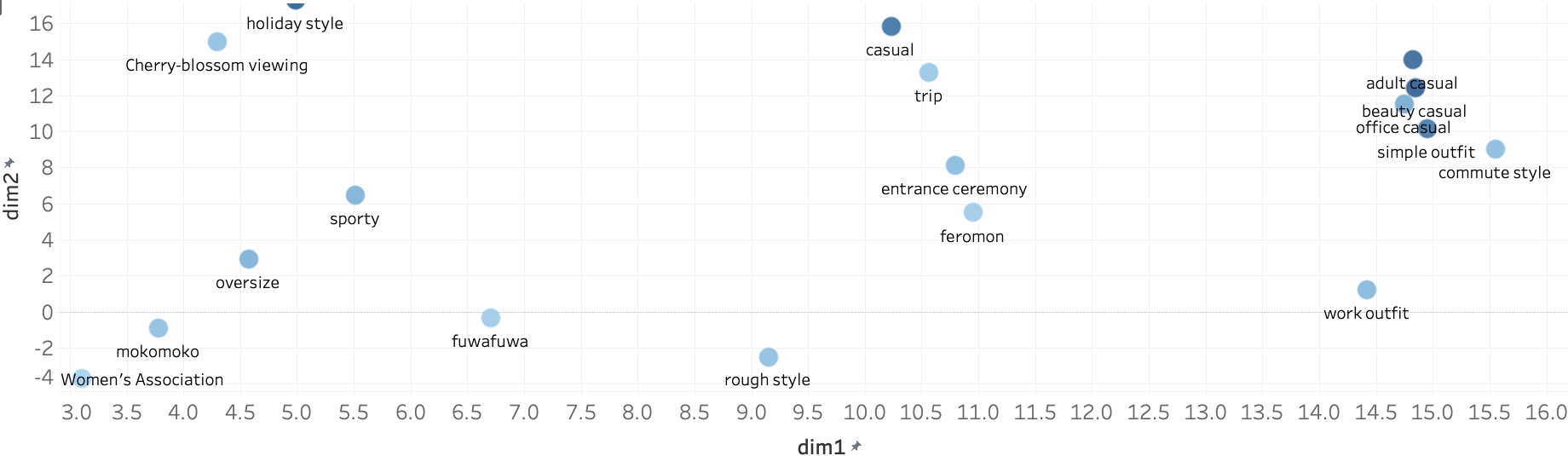}
\caption{Enlarged map of the area around many of the tags associated with ``casual'' (KL-divergence) \label{fig_result_mapping_zoom}}
\end{figure}

\quad Using this figure, it is possible to accurately understand the relationship between these abstract expressions, which have been used subjectively in the past when conversing about fashion. For example, the fact that ``office casual'' is more similar to ``beauty casual'' than to ``adult casual'' may have been an ambiguous fact for experts as well as non-specialists. It is also a new finding by quantitatively expressing each tag that ``office casual'' is closer to ``simple outfit,'' ``commute style,'' and ``work outfit'' than to ``adult casual.'' Furthermore, ``office casual,'' ``adult casual,'' and ``beautiful casual'' are quite similar in expression, and we can see that making them more ``casual'' brings them closer to ``trip'' and ``enrollment ceremony'' (suitable attire). Clearly, ``fuwafuwa'' and ``mokomoko'' are also similar and are onomatopoeic words used to describe the near meanings to like ``oversize'' clothing and ``rough style.''

\quad The use of this map-based interpretation support method will reduce the difficulties in understanding fashion-specific ambiguous expressions. It is expected that all users will be able to talk and explain fashion using words specific to the fashion field and make decisions based on a common understanding. Furthermore, it is important to note that this map is not based only on word (tag) co-occurrence relations, as in Word2Gauss and GVSE, but is obtained by considering image information.

\subsection{Interpretation of Variance}
The proposed model can acquire both the mean and the variance for embedded representations of images and tags. Although the conventional GVSE model can also acquire only variance for words, the embedded representation for the acquired words is not observed and considered. In contrast, in this study, the variance obtained is also thoroughly observed and considered.

\subsubsection{Variance for Attributes}
First, Table~\ref{table_tag_variance} below shows the top eight and bottom eight tags with the largest variance, their variance values, and the number of images (count) to which they are attached in the whole images.

\begin{table}[ht]
\centering
\caption{Summary of tags with large and small variance of embedded representation}
\label{table_tag_variance}
\scalebox{1.0}{
\begin{tabular}{| ccrr | ccrr |}
\multicolumn{8}{c}{} \\
\multicolumn{8}{c}{(a) Mahalanobis distance} \\
\hline
rank & tag & \multicolumn{1}{c}{variance} & \multicolumn{1}{c|}{count} & rank & tag & \multicolumn{1}{c}{variance} & \multicolumn{1}{c|}{count} \\
\hline \hline
1  & Ease-up Coordinates  & 1.0369 & 147 & 1095 & Petites            & 0.9144 & 2618 \\
2  & Hair Bands           & 1.0331 & 235 & 1096 & Mom's Coordinates  & 0.9137 & 811  \\
3  & Fall Colors          & 1.0280 & 86  & 1097 & Wedding            & 0.9088 & 697  \\
4  & Gaucho Pants         & 1.0258 & 74  & 1098 & Beautiful Casual   & 0.9085 & 1717 \\
5  & Big Silhouette       & 1.0254 & 194 & 1099 & Spring Coordinates & 0.9074 & 3111 \\
6  & Bun Hairstyles       & 1.0222 & 37  & 1100 & Denim              & 0.9069 & 3116 \\
7  & Floral Blouse        & 1.0221 & 14  & 1101 & Knitwear           & 0.9062 & 2792 \\
8  & Drawstring Bags      & 1.0215 & 84  & 1102 & Black              & 0.9036 & 2344 \\
9  & Ruffled Knitwear     & 1.0191 & 32  & 1103 & Mom Fashion        & 0.8926 & 952  \\
10 & Waist Mark           & 1.0183 & 25  & 1104 & Short Stature      & 0.8398 & 6714 \\
\hline

\multicolumn{8}{c}{} \\
\multicolumn{8}{c}{(b) KL-divergence} \\
\hline
rank & tag & \multicolumn{1}{c}{variance} & \multicolumn{1}{c|}{count} & rank & tag & \multicolumn{1}{c}{variance} & \multicolumn{1}{c|}{count} \\
\hline \hline
1  & Petites                  & 1.4426 & 2618 & 1095                 & Normcore          & 0.8587 & 8  \\
2  & Knitwear                 & 1.3853 & 2792 & 1096                 & Black Lace        & 0.8587 & 23 \\
3  & Adult Women              & 1.3736 & 1624 & 1097                 & tweedmill         & 0.8578 & 5  \\
4  & Fall Coordinates         & 1.3735 & 2291 & 1098                 & Beige Pants       & 0.8569 & 81 \\
5  & Otona Casual             & 1.3686 & 2759 & 1099                 & Dandy Glasses     & 0.8569 & 95 \\
6  & Little Recommend         & 1.3565 & 673  & 1100                 & Character T-shirt & 0.8554 & 26 \\
7  & Black                    & 1.3562 & 2344 & 1101                 & Red Converse      & 0.8549 & 39 \\
8  & Ballet Shoes             & 1.3495 & 1829 & 1102                 & Red Socks         & 0.8533 & 41 \\
9  & Wedding Party Style      & 1.3427 & 681  & 1103                 & O'Neill of Dublin & 0.8425 & 5  \\
10 & Winter Coordinates       & 1.3408 & 1534 & 1104                 & Gray Tights       & 0.8067 & 4 \\
\hline

\multicolumn{8}{c}{} \\
\multicolumn{8}{c}{(c) $2$-Wassestein distance} \\
\hline
rank & tag & \multicolumn{1}{c}{variance} & \multicolumn{1}{c|}{count} & rank & tag & \multicolumn{1}{c}{variance} & \multicolumn{1}{c|}{count} \\
\hline \hline
1  & Petites            & 1.4597 & 2618 & 1095                 & Danton                & 0.8490 & 20 \\
2  & Black              & 1.3552 & 2344 & 1096                 & Black Skirt           & 0.8483 & 85 \\
3  & Knitwear           & 1.3466 & 2792 & 1097                 & Cable-Knit            & 0.8428 & 6  \\
4  & Adult Women        & 1.3442 & 1624 & 1098                 & Gaucho Pants          & 0.8426 & 74 \\
5  & Fall Coordinates   & 1.3400 & 2291 & 1099                 & Beige Pants           & 0.8396 & 81 \\
6  & Adult Casual       & 1.3375 & 5680 & 1100                 & Voluminous Skirts     & 0.8393 & 22 \\
7  & Ballet Shoes       & 1.3157 & 1829 & 1101                 & ZOZO Summer Sale      & 0.8370 & 36 \\
8  & Otona Casual       & 1.3140 & 2759 & 1102                 & Normcore              & 0.8347 & 8  \\
9  & Loose Coordinates  & 1.3069 & 1558 & 1103                 & High Waist Pants      & 0.8233 & 33 \\
10 & Denim              & 1.2974 & 3116 & 1104                 & Fake Glasses          & 0.7748 & 95 \\
\hline
\end{tabular}
}
\end{table}

\quad First, a comparison of the tables obtained by the three measures shows that many of the tags judged to have particularly high variance are overlapped, in DGVSE models based on KL divergence and $2$-Wassestein distance. Conversely, the results for the Mahalanobis distance model differ significantly from those of the other two models. Combined with the results for the mean mentioned in the previous section, this suggests that KL divergence and $2$-Wassestein distance are similar measures and that Mahalanobis distance may be a very different measure compared to the other two measures.

\quad The results of KL divergence and $2$-Wassestein distance show that the variance tends to be larger for tags with a larger frequency of occurrence (count) and smaller for tags with a smaller frequency of occurrence (count). The top tags mostly contain abstract attributes, colors, and highly versatile items, such as ``denim,'' while the bottom tags mostly contain specific items that are not included in many coordinates. It is natural that the variance increases for highly versatile items and colors (specific tags,) because they are included in a variety of coordinates (i.e., assigned to many images). Therefore, a certain degree of correlation with the number of times a tag is assigned may be a basis for increasing the validity of the results. For example, the tag ``petites'' is not assigned to fashion items or atmosphere, but to images of models with a small height. Therefore, the variance of tags, such as ``knitwear,'' ``denim,'' ``adult casual,'' and ``otona casual,'' that are directly related to fashion are smaller than that of tags, despite these tags appearing more frequently than ``petites.'' This suggests the validity of the obtained variance.

\subsubsection{Variance for Images}
We discuss the variance of the images obtained from DGVSE. It was difficult to understand the relationship between images and dispersion only by observing the images. Therefore, we compared the variance estimated by DGVSE with each statistic related to the tags assigned to the images. The results are shown in the following Figure~\ref{fig_result_corr_cov}.

\begin{figure}[ht]
\centering
\subfigure[Mahalanobis distance]{
   \includegraphics[width=.90\columnwidth]{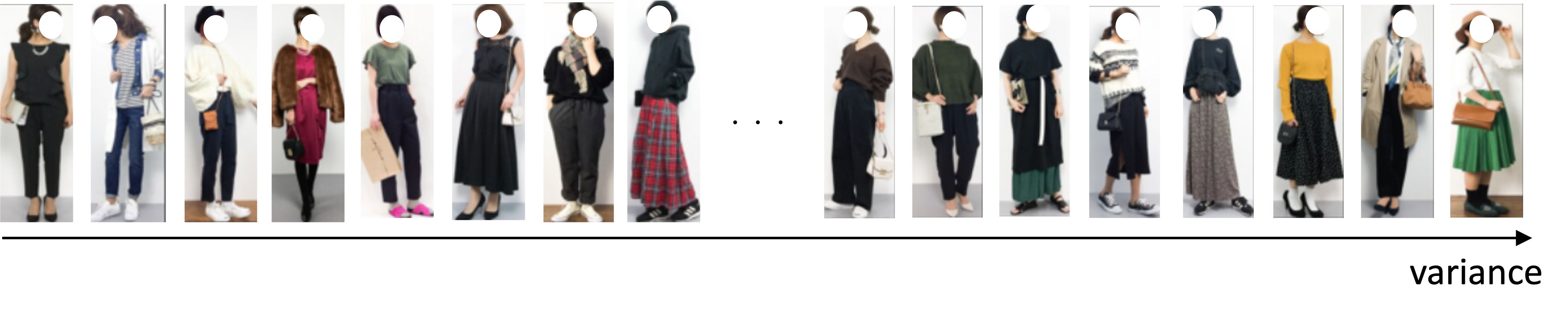}
   \label{fig_result_cov_ma}
} \\
\subfigure[KL-divercenge]{
   \includegraphics[width=.90\columnwidth]{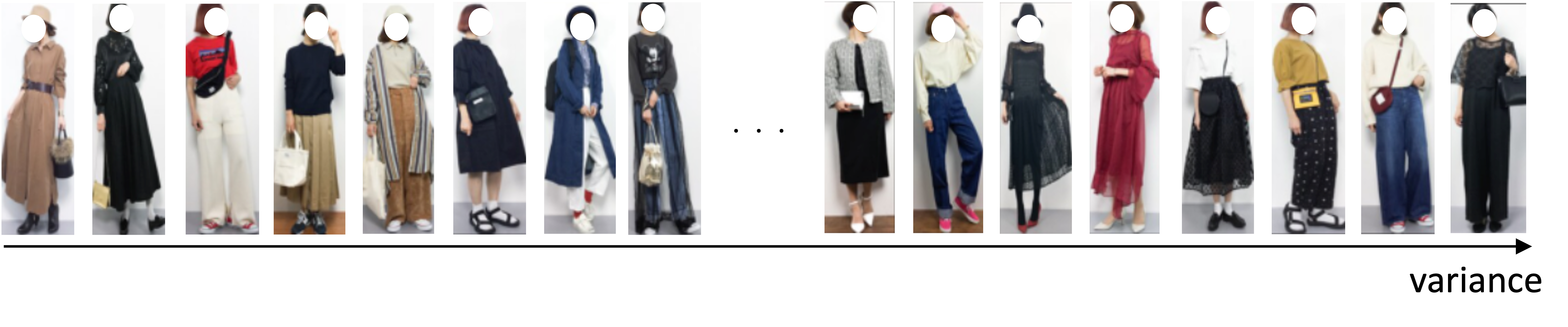}
   \label{fig_result_cov_kl}
} \\
\subfigure[$2$-Wassestein distance]{
   \includegraphics[width=.90\columnwidth]{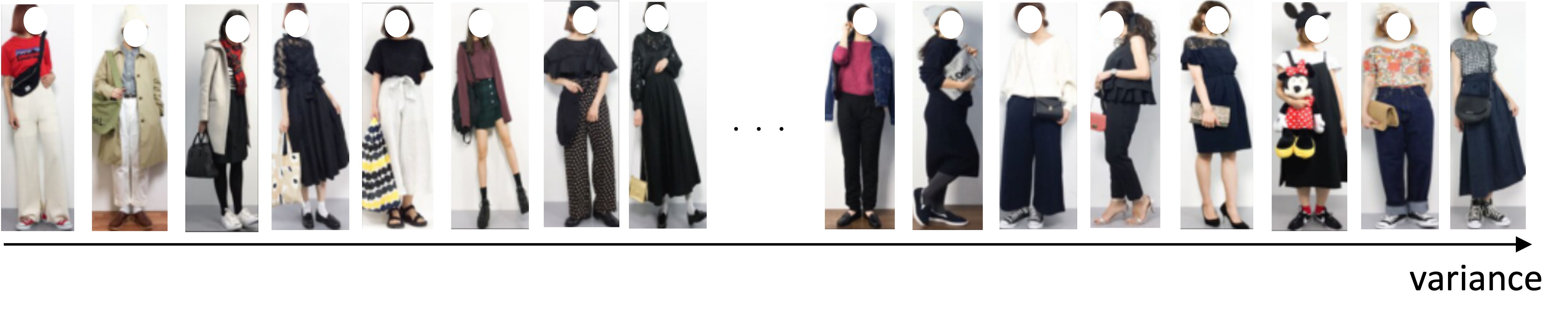}
   \label{fig_result_cov_ws}
} \\
\caption{Images sorted by variance of image embedded representation}
\label{fig_result_cov}
\end{figure}

\quad By observing this figure, it is possible to see which images are more semantically broad and which are narrower. However, it was difficult to understand the relationship between images and dispersion just by observing the images. Therefore, we compared the variance estimated by DGVSE with each statistic related to the tags assigned to the images. The results are shown in the following Figur~\ref{fig_result_corr_cov}.

\begin{figure}[ht]
 \begin{center}
  \subfigure[Mahalanobis distance]{
   \includegraphics[width=.48\columnwidth]{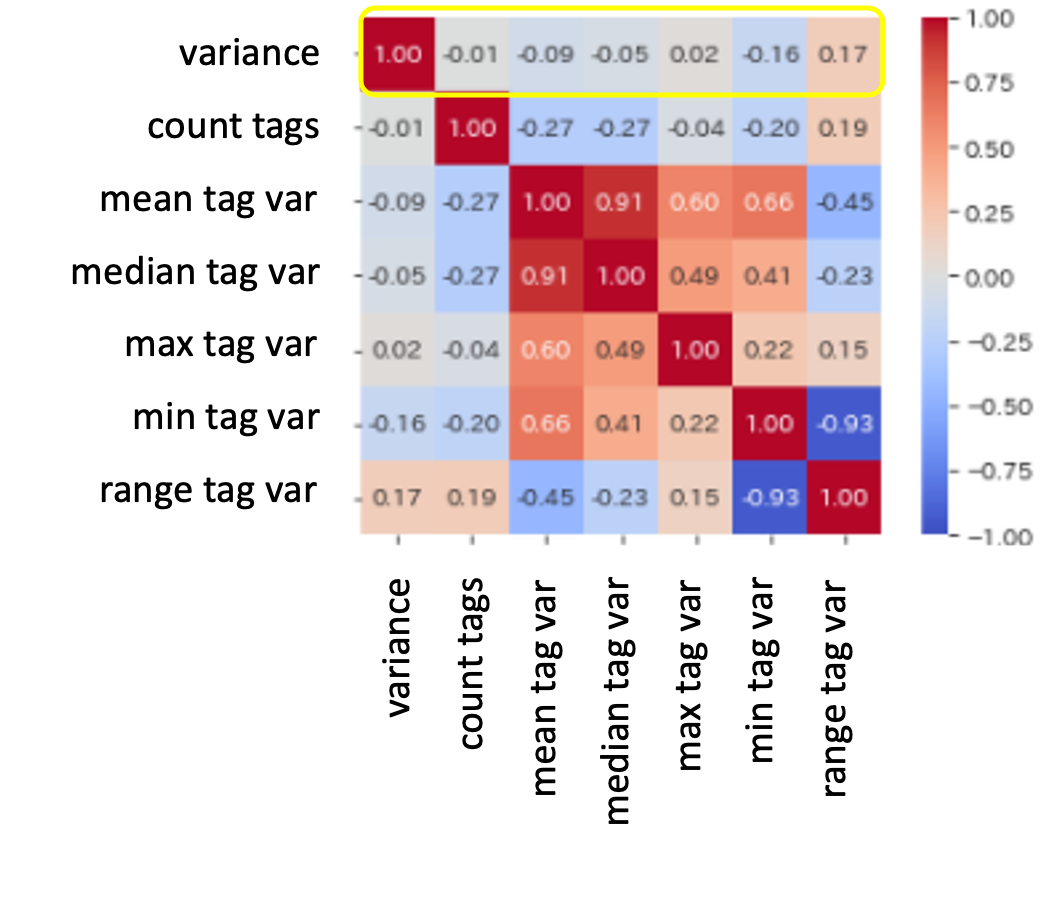}
  } \\
  \subfigure[KL-divergence]{
   \includegraphics[width=.48\columnwidth]{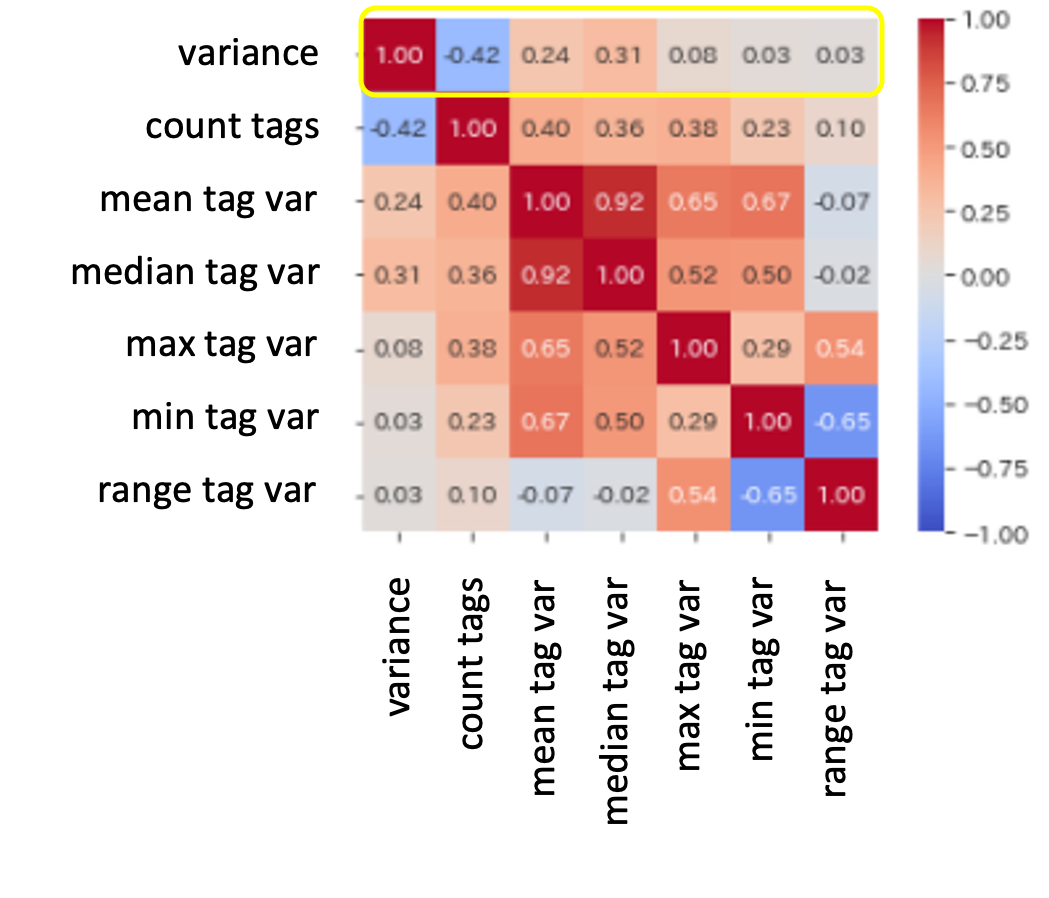}
  }~
  \subfigure[$2$-Wassestein distance]{
   \includegraphics[width=.48\columnwidth]{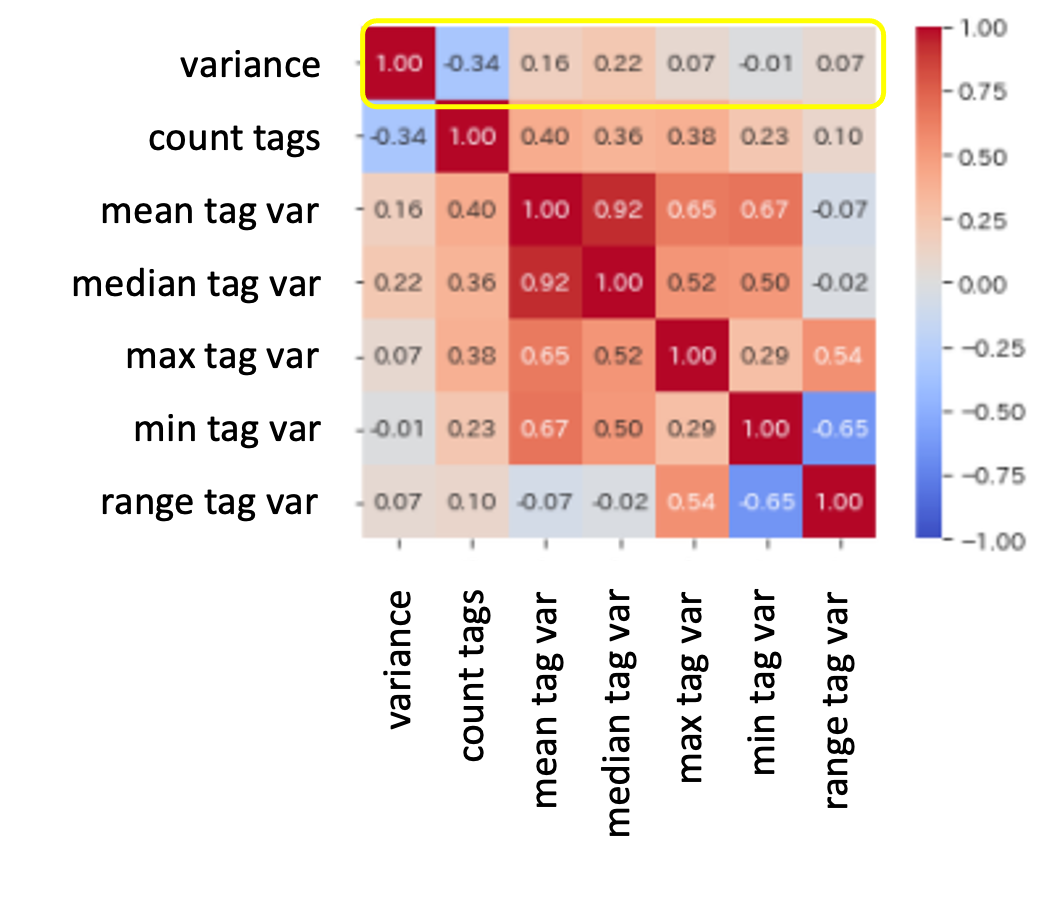}
  }
  \caption{Correlation coefficient matrix with each statistic associated with the tags assigned to the image \label{fig_result_corr_cov}}
 \end{center}
\end{figure}

\quad Results show that the variance of images obtained from the model adopting KL divergence and $2$-Wassestein distance is negatively correlated with the number of tags attached to the image, and no significant correlation was observed with other statistics. Conversely, the model adopting Mahalanobis distance showed little correlation with any of the statistics. This result can be attributed to the fact that when training the model, the placement of images with many tags in the embedding space is defined in detail by the tags. Conversely, images with few tags are ambiguous in terms of the placement specified by the tags, suggesting that they may require to be positioned well in relation to other images and tags. Some tags that should potentially be attached are often not assigned because tags are assigned by the general user. It is expected that the results on image variance obtained from DGVSE will be useful in understanding whether the tags assigned are insufficient to describe the target fashion image.

\quad These results suggest that in these situations, where the types and number of tags assigned to each image are diverse, the variance estimated for an image tends to be determined by the amount of information that can be obtained from the tags information for the model (ambiguity of the tags as a whole). In this study, we used the accumulated data as is, but it may be possible to learn a better embedding space by implementing tag completion and other measures. In such a case, the proposed method is expected to be used to focus on images with large variance and tag completion to reduce the variance and ultimately contribute to acquiring a delicate and accurate embedding space.

\subsection{Image Retrieval}
DGVSE allows for the estimation of the variance of the embedded representation of each image or tag and also maintains the original application methods of VSE in~\cite{Shimizu2022_FashionIntelligenceSystem}.

\quad Figure~\ref{fig_retult_retrieval} below shows an example of image retrieval results based on tag and image operations. We present results using a model adopting the $2$-Wassestein distance (out of the KL divergence model and $2$-Wassestein distance model), which is able to learn a reasonable embedding space based on the multifaceted analysis described above.

\begin{figure}[ht]
\centering
\includegraphics[width=0.975\linewidth]{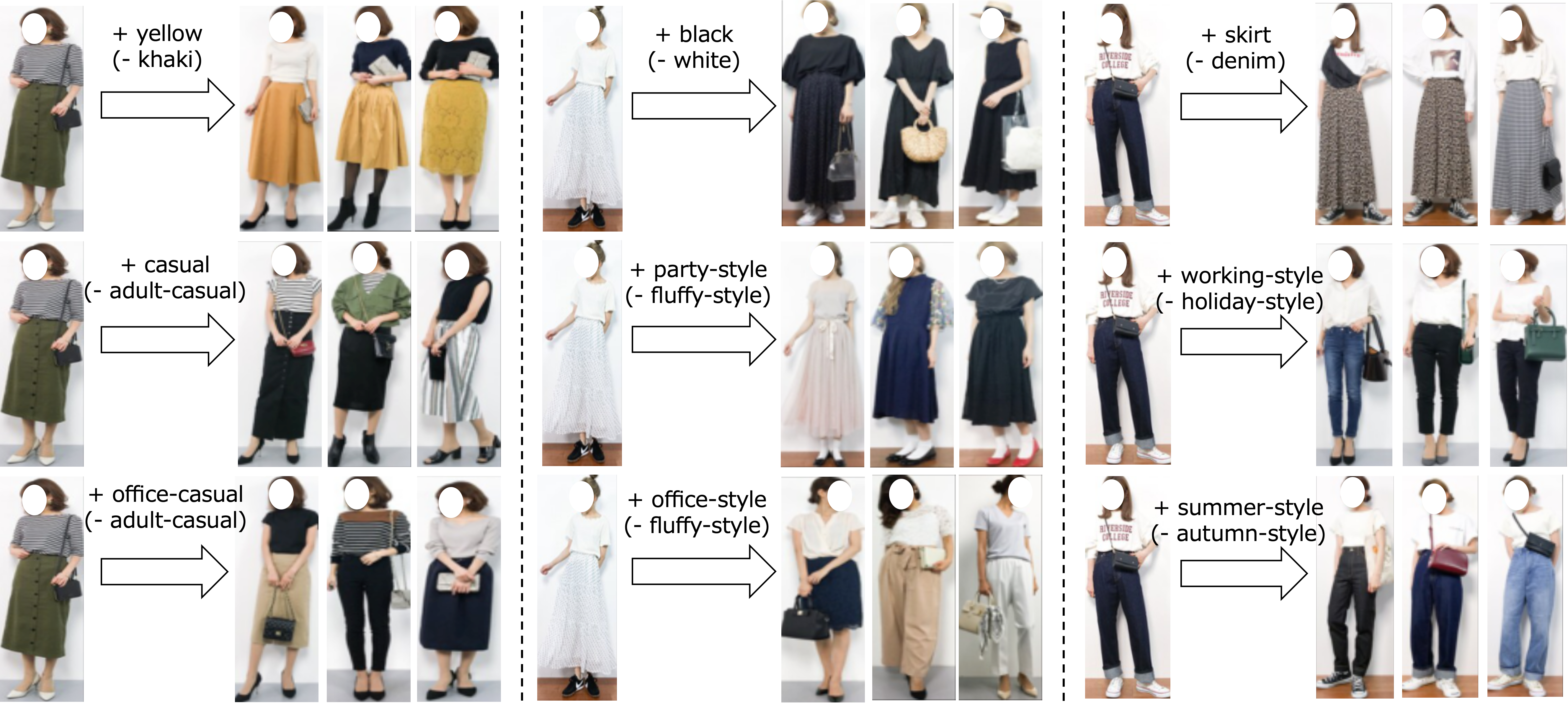}
\caption{Example of image retrieval ($2$-Wassestein distance) \label{fig_retult_retrieval}}
\end{figure}

\quad First, for clarity, the above three examples of using the image retrieval function with specific tags are shown. For example, if we remove the ``khaki'' tag from a query image (wearing a striped shirt and a khaki-colored skirt) that has been assigned the ``khaki'' tag and the ``adult casual'' tag and instead assign the ``yellow'' tag to the image with a yellow skirt, the image with a yellow skirt is retrieved, and this full-body outfit image stays in the adult casual category. In addition, the example in the two lines below shows the search results using abstract tags. For example, if the ``working-style'' tag is added instead of ``holiday-style'' to the image (on the right) with the tags ``denim,'' ``holiday-style,'' and ``autumn-style,'' it can search for images that are appropriate for the office while retaining the overall color scheme. Similarly, if the ``summer-style’’ tag is added instead of ``autumn-style,’’ the whole color tone is kept, and outfits with short sleeves or thin-looking fabrics are retrieved.

\quad Thus, DGVSE enables image retrieval utilizing tag and image operations, including fashion-specific abstract tags. 

\subsection{Image Re-ordering}
\quad Another function maintained from the original VSE is image re-ordering, which retrieves images that are more (less) relevant to the target tag by reordering images in the order of the strength of the relevance score calculated between the target tag and each image to which the tag is attached. An example of image re-ordering results using DGVSE is shown in Figure~\ref{fig_retult_reorder} below.

\begin{figure}[ht]
\centering
\includegraphics[width=0.975\linewidth]{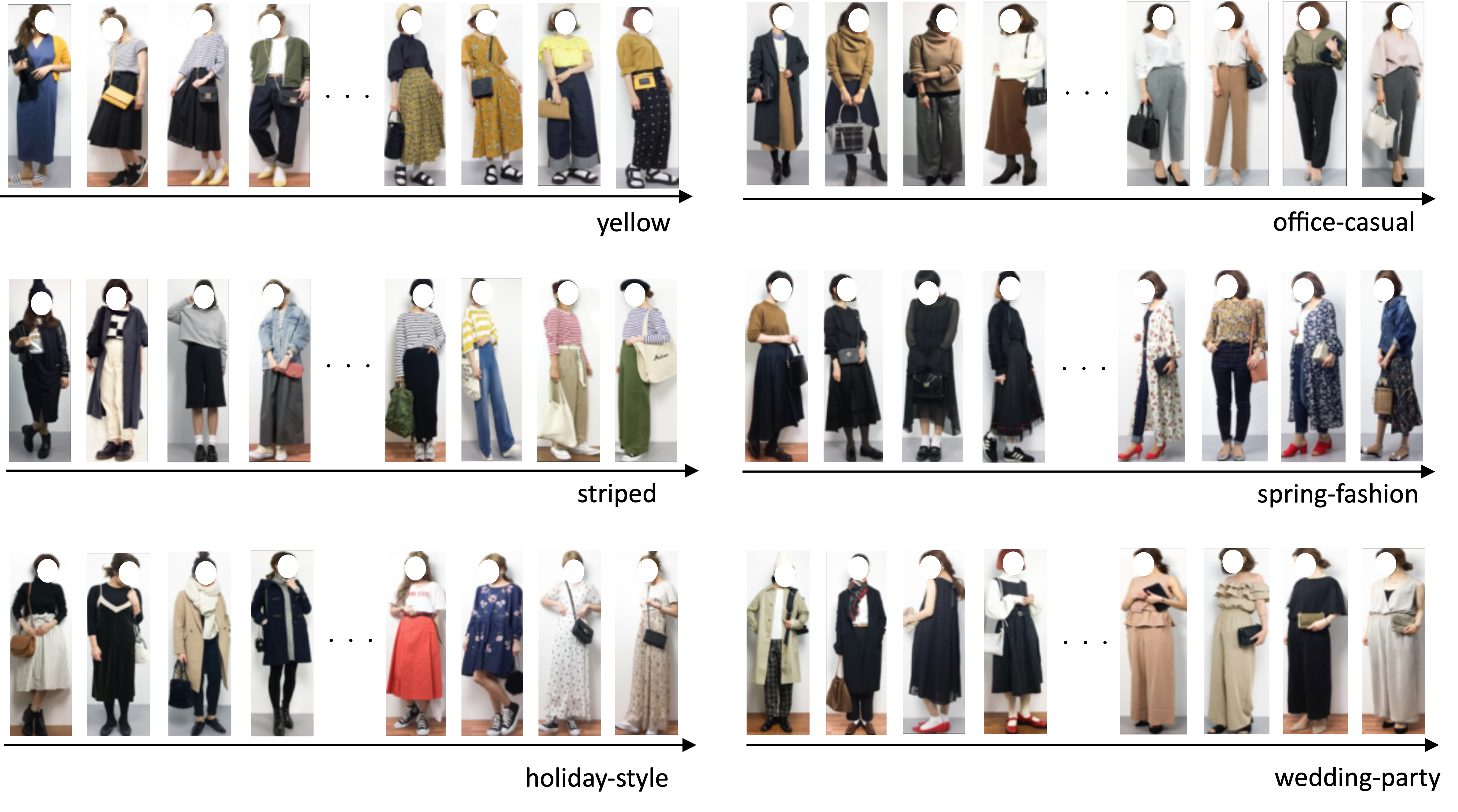}
\caption{Example of image re-ordering ($2$-Wassestein distance) \label{fig_retult_reorder}}
\end{figure}

\quad For example, outfits with a high relevance score to the specific ``yellow'' tag tend to have a high percentage of yellow areas in the entire image. Conversely, an outfit with a low relevance score will have fewer yellow areas. Other outfits with high relevance scores to the specific ``striped'' tag tend to have a high percentage of the border portion of the item in the whole image. Previously, it was only possible to display images with the ``yellow'' or ``striped'' tag in a batch; however, this application allows users to search for clothes that meet their detailed objectives such as ``I want to incorporate yellow in only one item'' or ``I want to find clothes that contain a border pattern throughout the entire outfit.” It is now possible to search for clothes in detail according to the  detailed objectives of the user.

\quad The results of image reordering using abstract expressions, such as ``holiday style,'' ``office casual,'' ``spring fashion,'' and ``wedding,'' and the relevance scores of tags related to usage scenes and seasons also have the potential to be used very effectively in actual services. For example, the results of ``holiday style, images with strong relevance to ``holiday style,'' are generally warm in color and often contain items with patterns, soft silhouette clothes and sneakers. Conversely, images with low relevance scores include items in specific colors, such as black and beige, and slightly more formal (office casual) items such as tights and heels. The images with high relevance to ``spring style'' contained brightly colored and patterned items, while the images with low relevance contained a lot of black. Using these results, users can ask themselves questions such as ``what is a good outfit for the holidays?'' and ``what is appropriate for spring?'' as well as ``what is the best outfit for a holiday?''

\quad Thus, DGVSE can reorder images utilizing tag and image operations, including fashion-specific abstract tags. These functions are expected to be effective in quantifying fashion-specific abstract terms and images and to support the interpretation of fashion by users.

\quad While the functions of image retrieval and image sorting are maintained from the conventional VSE, clearly, the proposed method has a wider range of applications than the conventional methods, because it also allows interpretation using variance as mentioned above. This is the strength of DGVSE in this application, and it is expected to be more widely effective in supporting user understanding of fashion in real applications.

\section{Discussion}
\subsection{On the Loss Function}
In section~\ref{sec:theory_mahalanobis}, we theoretically showed the risk that the Mahalanobis distance between points (images) and probability distributions (attributes) adopted in GVSE cannot implement stable learning. Moreover, by assuming variance even for images and by making it possible to measure the distance between variance against variance, many measures can be introduced. In this study, we adopted the Mahalanobis distance, KL divergence, and $2$-Wassestein distance and confirmed the validity and appropriateness of each measure through multifaceted experiments. Based on these results, it is necessary to further investigate which distance should be adopted.

\subsubsection{On the Risks of the Mahalanobis Distance}
The results of the mapping experiment using the mean values of the embedded representations obtained showed that 1) KL divergence and the $2$-Wassestein distance model produced reasonable maps in which similar expressions (e.g., ``mom-cordinate'' and ``mom-fashion'') were gathered in close proximity, while the Mahalanobis distance model produced a questionable mapping. 2) Similarly, the results for the variance of the tags obtained show that KL divergence and the $2$-Wassestein distance model have a larger variance for abstract expressions and for items that are included in many coordinates and slight for items that are not included in many coordinates, indicating that the results were somewhat valid. Conversely, the Mahalanobis distance model gave quite different results. 3) The results for the variance of the acquired images from KL divergence and the $2$-Wassestein distance model differed significantly from those of the Mahalanobis distance model.

\quad The Mahalanobis distance between the two probability distributions is expressed by Eq.~\eqref{eq_d_mahalanobis}. If we look at the part of the joint covariance matrix $\Sigma$ in this equation, we can see that it plays only a scaling role when taking the inner product of the differences of mean vectors between the two distributions. Specifically, when this part (the sum of the two distributional covariance matrices) is large, the overall Mahalanobis distance increases, and when it is small, the overall Mahalanobis distance decreases. Moreover, it plays no other role. This makes it questionable whether it is appropriate to adopt this type of distance to understand the spread of meanings of images and tags. Particularly, in the case of data similar to that used in this study, where the frequency of occurrence of each tag is highly skewed, the loss of the entire training batch containing the tag can be reduced by reducing the variance of the tag with the highest frequency of occurrence. Specifically, it is expected that learning will proceed in such a way that the variance is adjusted by looking at the frequency of occurrence of tags attached to images in the entire data set. Moreover, the experimental results showed a complete tendency to do this. This suggests that the adoption of the Mahalanobis distance may be risky for this target problem and for the proposed DGVSE model.

\subsubsection{On the Similarities Between KL divergence and 2-Wassestein Distance}
The respective results obtained from the models adopting KL divergence and $2$-Wassestein distance were very similar.

\quad It is known that the KL divergence measures the distance from the midpoints for multimodal distributions.
More precisely, given $p\in\mathcal{P}$, we search for the distribution $\hat{p}$ that minimizes the divergence from $p$ to a smooth submanifold $\mathcal{S}\subset\mathcal{P}$,
\begin{align}
    \hat{p} = \underset{q\in\mathcal{S}}{\arg\min} D_{KL}[p\|q].
\end{align}
Then, the best approximation $\hat{p}$ in the closure of $\mathcal{S}$ satisfies $\hat{p}(\bm{x}) = 0$ for $\bm{x}$ at which $p(\bm{x})=0$, and this property is called zero-forcing.
Note that for the reverse KL divergence $D_{KL}[q\|p]$, the best approximation $\hat{p}$ in the closure of $\mathcal{S}$ satisfies $\hat{p}(\bm{x})\neq 0$ for $x$ at which $p(\bm{x})\neq 0$.
On the other hand, $l$-Wassestein distance is robust for multimodal distributions~\cite{arjovsky2017wasserstein, kolouri2018sliced}. 

However, there was no significant difference between the results obtained from both models in this study. Therefore, it is possible that the distributions estimated as embedded representations of each tag or image are unimodal, at least for the data of this study. It is also suggested that the distance measures selected in this study (KL divergence and $2$-Wassestein distance) do not have any major problems. Thus, the distance to be used should be appropriately selected according to the target problem and the characteristics of the data, while observing the theoretical and empirical results.

\subsection{On the Covariance Matrix}
In this study, we assumed a multidimensional Gaussian distribution with only the diagonal components having values behind the embedded representation. That is, the parameters were estimated assuming that each component follows an independent Gaussian distribution. This method is used in many Gaussian embedding methods such as Word2Gauss and GVSE. If the non-diagonal components also have values, the number of parameters increases to ``(\{number of images\} $+$ \{number of tags\}) $\times$ \{number of dimensions of the embedded representation\}$^2$.'' On the other hand, if only the diagonal components have values, the number of parameters can be reduced to ``(\{number of images\} $+$ \{number of tags\}) $\times$ \{number of dimensions of the embedded representation\}.''

\quad This study assumes a ``spherical'' situation where all diagonal components have the same value. This method reduces the number of parameters to ``\{number of images\} $+$ \{number of tags\}.'' This method is used in many Gaussian embedding methods such as Word2Gauss and GVSE. On the other hand, a ``diagonal'' situation in which all diagonal components have different values can also be considered, but the number of parameters increases to ``(\{number of images\} $+$ \{number of tags\}) $\times$ \{number of dimensions of the embedded representation\}.'' Even if the analyst adopts the diagonal covariance matrix and different variances are obtained for each dimension, the marketing insight that can be gained from the results is small. In other words, the analyst wants to know one value per image or tag (the overall trend) and is unlikely to gain much benefit if different values are obtained for each dimension individually.

\quad Therefore, the method of estimating a ``spherical'' covariance matrix adopted in this study is considered the best in achieving the objective of this study, which is to propose a method that is useful for marketing.

\subsection{On the Strengths of the Proposed Model}
Because VSE, GVSE, and DGVSE have been proposed in previous studies, it is necessary to clarify the differences between them and to understand which is stronger in which situations. The following table summarizes the characteristics of each method.

\begin{table}[ht]
\centering
\caption{Summary comparing the characteristics of each method}
\label{table_feature_vses}
\scalebox{0.90}{
\begin{tabular}{c||c|c|c}
\hline
                                  & VSE & GVSE  & DGVSE \\ \hline \hline
embedding of image (mean)         & $\bigcirc$ & $\bigtriangleup$ (without visual information) & $\bigcirc$ \\
embedding of attribute (mean)     & $\bigcirc$ & $\bigcirc$ & $\bigcirc$     \\
embedding of image (variance)     & $\times$   & $\bigcirc$ & $\bigcirc$     \\
embedding of attribute (variance) & $\times$   & $\times$   & $\bigcirc$     \\
\hline
\end{tabular}
}
\end{table}

\quad As shown in the above table, DGVSE is the one that can capture more from a single model. On the other hand, VSE has the advantage of requiring fewer parameters to be estimated, although it is not able to capture variance. If the purpose is only to use the original functions of VSE, such as search and sorting, the choice should be based on the accuracy in the target task. However, because the purpose of this study is to create term maps and visualize the spread of meanings, DGVSE is suitable for this purpose.

\section{Conclusions}
In this study, we proposed a new model, DGVSE, which is a kind of new fashion intelligence system that enables interpretation of abstract fashion attribute information. The proposed model can be added to the VSE function of mapping images and tags in the same space and can estimate the variance of both embedded representations. The conventional method of GVSE faces the following problems: 1) it is not an end-to-end learning method, 2) it only observes the co-occurrence of words and ignores image information when learning the embedded representation (mean) of words, and 3) it assumes a multi-dimensional Gaussian distribution only for words (hence the problem of measuring the distance between a point and the probability distribution using the Mahalanobis distance); the proposed model solves these problems. The risk that loss functions that include the Mahalanobis distance cannot be learned stably has been theoretically and empirically demonstrated. We showed how multifaceted analysis contributes to reducing fashion-specific ambiguity and complexity by applying the proposed method to a real service dataset. We expect that this method can be used in the real world to develop systems that support user purchasing activities in online fashion search.

\subsection{Limitations}
A quantitative assessment of the performance of the models would have been ideal. However, conducting a quantitative evaluation requires a large annotation effort on a subjective and abstract issue that involves the mobilization of experts; therefore, we avoided that task. Instead, theoretical validation and multifaceted qualitative evaluation experiments demonstrated the effectiveness of the proposed model. Theoretical verification of the similarity and difference between $2$-Wassestein and KL divergence is also included in the limitations.

\subsection{Future works}
Future research direction include conducting quantitative evaluation experiments for abstract and difficult-to-evaluate problems, such as those covered in this study, creating datasets that enable these experiments (e.g., inspired by~\cite{kimura2021shift15m}), examining models that assume probability distributions other than the multidimensional Gaussian distribution, and further analysis on the best distance measure to use~\cite{renyi1961measures,amari2009alpha,kimura2021alpha}.
In particular, the framework of information geometry~\cite{amari2000methods,amari2000methods,ay2017information}, which considers Riemannian manifolds formed by probability distributions, is very useful, and many machine learning algorithms have been analyzed~\cite{amari1995information,amari1998natural,murata2004information,gaussian_riemannian_face_regression,kimura2022information,kimura2021generalized}. We aim to better understand the behavior of the algorithm by using these mathematical tools.

\section*{Competing Interests}
The authors declare no conflicts of interest.

\section*{Acknowledgements}
This work was supported by JSPS KAKENHI Grant Number 21H04600.


\end{document}